\theoremstyle{plain}
\newtheorem{theorem}{Theorem}
\newtheorem{remark}{Remark}
\theoremstyle{definition}
\newtheorem{definition}{Definition}
\definecolor{tbcolor}{rgb}{0.1, 0.2, 0.7}
\definecolor{cadmiumgreen}{rgb}{0.0, 0.42, 0.24}
\newcommand{\R}{\mathbb{R}}
\newcommand{\N}{\mathbb{N}}
\newcommand{\mat}[1]{\begin{matrix}#1\end{matrix}} % no delimiters
\newcommand{\bmat}[1]{\left[\mat{#1}\right]} % square brackets as delimiters
\title{\LARGE \bf A Convex Parameterization of Robust Recurrent Neural Networks}
\author{Max Revay, Ruigang Wang, Ian R. Manchester% <-this % stops a space
	\thanks{This work was supported by the Australian Research Council.}% <-this % stops a space
	\thanks{The authors are with the Australian Centre for Field Robotics, The University of Sydney, Sydney, NSW 2006, Australia
		(e-mail: {\tt\small  ian.manchester@sydney.edu.au}).}%
}
\begin{document}

\maketitle
\thispagestyle{empty}
\pagestyle{empty}

%%%%%%%%%%%%%%%%%%%%%%%%%%%%%%%%%%%%%%%%%%%%%%%%%%%%%%%%%%%%%%%%%%%%%%%%%%%%%%%%
\begin{abstract}
Recurrent neural networks (RNNs) are a class of nonlinear dynamical systems often used to model sequence-to-sequence maps. RNNs have excellent expressive power but lack the stability or robustness guarantees that are necessary for many applications. In this paper, we formulate convex sets of RNNs with stability and robustness guarantees. The guarantees are derived using incremental quadratic constraints and can ensure global exponential stability of all solutions, and bounds on incremental $ \ell_2 $ gain (the Lipschitz constant of the learned sequence-to-sequence mapping). Using an implicit model structure, we construct a parametrization of RNNs that is jointly convex in the model parameters and stability certificate. We prove that this model structure includes all previously-proposed convex sets of stable RNNs as special cases, and also includes all stable linear dynamical systems. We illustrate the utility of the proposed model class in the context of non-linear system identification.
\end{abstract}

\section{INTRODUCTION}

%\remove{Neural networks are a class of universal function approximator frequently used in machine learning. While there have been some great successes in solving complex tasks, their lack of robustness often prevents their application, particularly in a safety-critical context. This realization has motivated a large amount of research into the area of adversarial defenses \cite{Salman:2020,Goodfellow:2014,Cheng:2018} . This area seeks guarantees of robustness against adversarially chosen inputs.} \tb{One Sentence}

%\remove{Applications are not really clear. There are more than just safety critical applications.}

RNNs are state-space models incorporating neural networks that are frequently used in system identification and machine learning to model dynamical systems and other sequences-to-sequence mappings. It has long been observed that RNNs can be difficult to train in part due to model instability, referred to as the exploding gradients problem \cite{bengio1994learning}, and recent work shows that these models are often not robust to input perturbations \cite{cheng2020seq2sick}. These issues are related to long-standing concerns in control theory, i.e. stability and Lipschitz continuity solutions of dynamical systems \cite{Zames:1966}.

There are many types of stability for nonlinear systems (e.g., RNNs). When learning dynamical systems with inputs, Lyapunov approaches are inappropriate as they require the construction of a Lyapunov function about a known stable solution. In machine learning and system identification, however, the goal is to simulate the learned model with new inputs, generating new solutions.
Incremental stability \cite{Desoer:1975} and contraction analysis \cite{Lohmiller:1998} avoid this issue by showing stability for all inputs and trajectories.

Even if a model is stable, it is usually problematic if its output is very sensitive to small changes in the input.
This sensitivity can be quantified by the model's incremental $\ell_2$ gain. Finite incremental $\ell_2$ gain implies both boundedness and continuity of the input-output map \cite{Desoer:1975}.
Furthermore, the incremental $\ell_2$ gain bound is also a bound on the Lipschitz constant of the sequence-to-sequence mapping. In machine learning, the Lipschitz constant is used in proofs of generalization bounds \cite{Bartlett:2017}, analysis of expressiveness \cite{Zhou:2019} and guarantees of robustness to adversarial attacks \cite{Huster:2018,Qian:2019}. 

%\remove{Need to emphasiz e the importance of convexity?}

%Even for linear systems, identifying systems with guarantees of stability or $\ell_2$ gain bounds is difficult

The problem of training models with stability or robustness guaranteed a-priori has seen significant attention for both linear \cite{lacy2003subspace, miller2013subspace} and nonlinear \cite{Tobenkin:2017,umlauft2017learning,Kolter:2019} models. The main difficult comes from the non-convexity of most model structures and their stability certificates. Some methods deal with this difficulty by simply fixing the stability certificate and optimizing over the model parameters at significant cost to model expressibility \cite{Miller:2019}.
It has recently been shown that implicit parametrizations allow joint convexity of the model and a stability certificate for linear \cite{Umenberger:2018}, polynomial \cite{Tobenkin:2017} and RNN \cite{Revay:2019} models. It has been observed that stability constraints serve as an effective regulariser and can improve generalisation performance \cite{Umenberger:2018specialized, Revay:2019}.

When a system is expressed in terms of a neural network, even the problem of analyzing the stability of known dynamics is difficult. A number of approaches have formulated LMI conditions \cite{Kaszkurewicz:2000,Barabanov:2002,Chu_bounds_1999} guaranteeing Lyapunov stability of a particular equilibrium. Recently, incremental quadratic constraints  \cite{Zames:1966}  have been recently applied to (non-recurrent) neural networks to develop the tightest bounds on the Lipschitz constant known to date \cite{Fazlyab:2019}. %\tb{While not directly applicable to the subject matter of this paper}, this line of research shows that employing incremental quadratic constraintsto describe the possible behavior of a neural network's non-linearities can effectively decrease the conservatism of stability criteria. In particular, 

 {\it Contributions:} In this letter we propose a new convex parameterization of RNNs satisfying stability and robustness conditions. By treating RNNs as linear systems in feedback with a slope-restricted, element-wise nonlinearity, we can apply methods from robust control to develop stability conditions that are less conservative than prior methods. The proposed model set contains all previously published sets of stable RNNs, and all stable linear time-invariant (LTI) systems.
	Using implicit parameterizations with incremental quadratic constraints, we construct a set of models that is jointly convex in the model parameters, stability certificate and the multipliers required by the incremental quadratic constraint approach. Joint convexity in all parameters simplifies the training of stable models as constraints can be easily dealt with using penalty, barrier or projected gradient methods. 

\noindent {\it Notation.} We use $ \N,\R $ to denote the set of natural and real numbers, respectively. The set of all one-side sequences $ x:\N\rightarrow\R^n $ is denoted by $ \ell_{2e}^n $. Superscript $ n $ is omitted when it is clear from the context.  For $x\in\ell_{2e}^n$, $x_t\in\R^n$ is the value of the sequence $ x $ at time $t\in \N$. The notation $ |\cdot|:\R^n\rightarrow\R $ denotes the standard 2-norm. The subset $ \ell_2\subset \ell_{2e} $ consists of all square-summable sequences, i.e., $ x\in\ell_2 $ if and only if the $ \ell_2 $ norm $ \|x\|:=\sqrt{\sum_{t=0}^{\infty}|x_t|^2} $ is finite. Given a sequence $ x\in\ell_{2e} $, the $ \ell_2 $ norm of its truncation over $ [0,T] $ with $ T\in\N $ is written as $ \|x\|_T:=\sqrt{\sum_{t=0}^{T}|x_t|^2} $. 
%Given a piecewise differentiable vector function $ f:\R^n\rightarrow\R^m $, we use $ D^+f(x;v) $ to  denote the one-side directional derivative of $ f(\cdot) $ at $ x $ in the direction $ v $, i.e. $ D^+f(x;v):=\lim\limits_{s\rightarrow 0^+}[f(x+sv)-f(x)]/s $. If $ f $ is differentiable, then $ D^+f(x;v)=\frac{\partial f}{\partial x}v $. 
For matrices $A$, we use $A\succ 0$ and $A\succeq0$ to mean $A$ is positive definite or positive semi-definite respectively and $A\succ B$ and $A\succeq0$ to mean $A-B\succ0$ and $A-B\succeq0$ respectively. The set of diagonal, positive definite matrices is denoted $\mathbb{D}_+$. 

\section{Problem Formulation}

 We are interested in learning nonlinear state space models:
\begin{gather}
x_{t+1}=f_\theta(x_t,u_t) \label{eq:rnn-ss}, \\
y_t=g_\theta(x_t,u_t) \label{eq:rnn-output},
\end{gather}
where $ x_t \in \mathbb{R}^n $ is the state, $u_t \in \mathbb{R}^{m}$ is a known input and $y_t \in \mathbb{R}^{p}$ is the output.
The functions $ f_\theta,g_\theta $ are parametrized by $\theta\in\Theta\subseteq\R^N$ and will be defined later.
Given initial condition $x_0 = a$, the dynamical system \eqref{eq:rnn-ss}, \eqref{eq:rnn-output} can be provides a sequence-to-sequence mapping $ S_{a}:\ell_{2e}^m\mapsto \ell_{2e}^p $.

% whose model can be described by a dynamical system with finite-dimensional state $ x_t \in \mathbb{R}^n $ and initial state $ x_0=a $, driven by a known input $u_t \in \mathbb{R}^{m}$ and producing an output $y_t \in \mathbb{R}^{p}$.
% parametrized by $\theta\in\Theta\subseteq\R^N $
 
%where functions $ f_\theta,g_\theta $ will be defined later. 

%\remove{It is often desired to learn RNNs which have predictable responses to a wide variety of inputs. One direct approach is to search for RNNs from a model set with stability and robustness guarantees.} \tb{One Sentence.} First, we introduce some stability definitions.

\begin{definition}
	The system \eqref{eq:rnn-ss}, \eqref{eq:rnn-output} is termed \textit{incrementally $\ell_2$ stable} if for any two initial conditions $ a $ and $ b $, given the same input sequence $u$, the corresponding output trajectories $ y^a $ and $ y^b $ satisfy $ y^a-y^b\in\ell_2^p $. 
\end{definition}
This definition implies that initial conditions are forgotten, however, the outputs can still be sensitive to small perturbations in the input. In such cases, it is natural to measure system robustness in terms of the incremental $\ell_2$-gain.
\begin{definition}
	The system \eqref{eq:rnn-ss}, \eqref{eq:rnn-output} is said to have an \emph{incremental $\ell_2$-gain} bound of $\gamma$ if for all pairs of solutions with initial conditions $ a,b\in\R^n $ and input sequences $ u^a,u^b\in \ell_{2e}^m $, the output sequences $ y^a,y^b\in\ell_{2e}^p $ satisfy
	\begin{equation}\label{eq:L2-gain}
	\left\|y^a-y^b\right\|_T^2\leq \gamma^2\left\|u^a-u^b\right\|_T^2+d(a,b),\quad\forall T\in\N,
	\end{equation}
	for some $d(a,b) \geq0$ with $d(a,a) = 0$.
\end{definition}
Note that the above definition implies  incremental $ \ell_2 $ stability since $ \|y^a-y^b\|_T^2\leq d(a,b) $ for all $ T\in \N $ when $ u^a=u^b $. It also shows that all operators defined by \eqref{eq:rnn-ss} and \eqref{eq:rnn-output} are Lipschitz continuous with Lipschitz constant $\gamma$, i.e. for any $ a\in \R^n $ and all $ T\in \N $
\begin{equation}
\|S_a(u) - S_a(v)\|_T \leq  \gamma \|u - v\|_T,\quad \forall u,v \in \ell_{2e}^m.  
\end{equation}

The goal of this work is to construct a rich parametrization of the functions $f_\theta$ and $g_\theta$ in \eqref{eq:rnn-ss}, \eqref{eq:rnn-output}, with robustness guarantees. We focus on two robustness guarantees in this work:
\begin{enumerate}
	\item A model set parametrized by $\Theta_* \subset R^N$ is robust if for all $\theta \in \Theta_*$ the system has finite incremental $\ell_2$-gain.
	\item A model set parameterized by $\Theta_\gamma \subset R^N$ is $\gamma$-robust if for all $\theta \in \Theta_\gamma$ the system has an incremental $ \ell_2 $-gain bound of $ \gamma $.
\end{enumerate}

\section{Robust RNNs}

\subsection{Model Structure}

%\tb{Our work can be applied to the models with multi-layer network. To streamline the presentation, we will focus on the case with one-layer network. }

%\tb{Our work can be applied to multi-layer networks. For brevity and clarity, however, we will focus on the single layer case. The approach to the multi-layer is the same as in \cite{Revay:2019}.}

\begin{figure}
	\centering
	\includegraphics[width=0.45\linewidth]{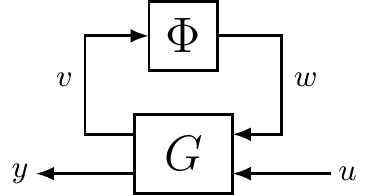}
	\caption{\label{fig:feedback} Feedback interconnection for RNNs.}
\end{figure} 

We parameterize the functions \eqref{eq:rnn-ss}, \eqref{eq:rnn-output} as a feedback interconnection between a linear system $G$ and a static, memoryless nonlinear operator $\Phi$:
%As shown in Fig.~\ref{fig:feedback}, we treat the RNN as a feedback interconnection of a linear system $ G $ and a static, memoryless nonlinear operator $ \Phi $
\begin{gather} \label{eq:G-op} 
G\;\begin{cases}
x_{t+1} = \bar{F} x_t + \bar{B}_1 w_t + \bar{B}_2 u_t \\
y_t = C_1 x_t + D_{11} w_t + D_{12} u_t \\
v_t = \bar{C}_2 x_t + \bar{b} + \bar{D}_{22} u_t
\end{cases}, \\ 
w=\Phi(v), \label{eq:Phi-op}
\end{gather}
where $ \Phi(v)=[\phi(v_1)\ \cdots\ \phi(v_q)]^\top $ with $ v_i $ as the $ i $th component of the $ v \in \ell_{2e}^q$. This feedback interconnection is shown in Fig. \ref{fig:feedback}. We assume that the slope of $\phi$ is restricted to the interval $[0, \beta]$:
\begin{equation} \label{eq:slope_restricted}
0 \leq \frac{\phi(y) - \phi(x)}{y - x} \leq \beta, \quad \forall x,y \in \mathbb{R}, ~ x\neq y.
\end{equation}

In the neural network literature, such functions are referred to as ``activation functions'', and common choices (e.g. tanh, ReLU, sigmoid) are slope restricted \cite{goodfellow2016deep}. 

The proposed model structure is highly expressive and contains many commonly used model structures. 
For instance, LTI systems are obtained when $\bar{B}_1 = 0$ and $D_{11}=0$.
RNNs  of the form \cite{Elman:1990}:
\begin{gather}
x_{t+1}=\mathcal{B}_1\Phi(\mathcal{A}x_t+\mathcal{B}u_t+b), \label{eq:mdl-explicit}\\ 
y_{t}=\mathcal{C}x_t+\mathcal{D}u_t, \label{eq:elman_output}
\end{gather}
are obtained with the choice $\bar{F}=0$, $\bar{B}_1= \mathcal{B}_1$, $\bar{B}_2 = 0$, $C_1=\mathcal{C}$, $D_{11}=0$, $D_{12} = \mathcal{D}$, $\bar{C}_2 = \mathcal{A}$, $\bar{D}_{22} =\mathcal{B}$ and $\bar{b} = b$. This implies \eqref{eq:G-op}, \eqref{eq:Phi-op} is a universal approximator for dynamical systems over bounded domains as $q\rightarrow\infty$ \cite{pinkus1999approximation}.

%he linear dynamics $F$ allows for the representation of skip/residual connections and linear systems.

%For two RNN parameters $ \theta_1,\theta_2 $, $ \theta_1\sim\theta_2 $ means their corresponding RNNs are input/output equivalent. For two model sets $ \Theta_1,\Theta_2 $, $ \Theta_1\subset\Theta_2 $ implies that for any $ \theta_1\in\Theta_1 $, there exists a $ \theta_2\in\Theta_2 $ such that $ \theta_1\sim\theta_2 $. Note that $ \Theta_e\subset \Theta_i $ as the implicit and explicit models are input/output equivalent if $ E=B_1 $, $ F=0 $, $ B_2=0 $, $ C_1=C $, $ D_{11}=0 $, $ D_{12}=D $, $ C_2=A $ and $ D_{22}=B $.

%The implicit model \eqref{eq:Phi-op}, \eqref{eq:G-op} is called a Robust RNN if its incremental $ \ell_2 $-gain is bounded by some constant $ \gamma $. 
%To characterize the set of Robust RNNs, we first derive the associated differential dynamics, which can also be expressed as a feedback interconnection of a linear system $ \delta G $ and  a differential operator $ \delta \Phi $:
%\begin{align}
%\delta G:\; &
%\begin{cases}
%E\delta_{x_{t+1}}= {F} \delta_{x_t} + {B}_1 \delta_{w_t} + {B}_2 \delta_{u_t} \\
%\delta_{y_t} = C_1 \delta_{x_t} + D_{11} \delta_{w_t} + D_{12} \delta_{u_t} \\
%\Lambda \delta_{v_t} = {C}_2 \delta_{x_t} + {D}_{22} \delta_{u_t}
%\end{cases} \\
%\delta \Phi:\; &\quad \delta_{w_t}=D^+\Phi(v_t;\delta_{v_t}).\label{eq:diff_Phi}
%\end{align}

Even for linear systems, the set of robust or $\gamma$-robust models is non-convex.
Constructing a set of parameters for which  \eqref{eq:G-op}, \eqref{eq:Phi-op} is robust or $\gamma$-robust is further complicated by presence of the nonlinear activation function in $ \Phi $. We will simplify the analysis by replacing $ \Phi $ with incremental quadratic constraints.

%Proving that the model is a robust RNN is complicated by presence of the nonlinear activation function in $ \Phi $. We will simplify the analysis by replacing $ \Phi $ with incremental integral quadratic constraints (incremental IQCs).

\subsection{Description of $\Phi$ by Incremental Quadratic Constraints}
%\remove{An incremental IQC for the operator $ \Phi $ can be viewed as an IQC (\cite{Megretski:1997}) for the differential operator $ \delta \Phi $ \cite{Wang:2019}.}

%We will now construct an incremental quadratic constraint corresponding to \eqref{eq:slope_restricted}. 
%A simple incremental quadratic constraint can be constructed from \eqref{eq:slope_restricted}.
%A simple quadratic constraint 
Multiplying \eqref{eq:slope_restricted} through by $(y-x)^2$, and combining the two inequalities, we get:
\begin{equation}\label{eq:slope_resiction_constraint}
\bmat{y-x \\ \phi(y) - \phi(x)}^\top \bmat{0 & \beta\\\beta & -2}\bmat{y-x \\ \phi(y) - \phi(x)} \geq 0.
\end{equation}
For $v^a, v^b \in \R^q$ and $w^a = \Phi(v^a)$, $w^b = \Phi(v^b)$, \eqref{eq:slope_resiction_constraint} holds for each element with $y=v^a_i$ and $x=v^b_i$. 
%\begin{gather}
%\eqref{eq:slope_restricted} \implies \left(\frac{w^a_i - w^b_i}{v^a_i - v^b_i} \right)  \left(\frac{w^a_i - w^b_i}{v^a_i - v^b_i} -\beta \right) \leq 0 \nonumber\\
%\implies \left(w^a_i - w^b_i\right)(w^a_i - w^b_i - \beta (v^a_i - v^b_i)^2) \leq 0, \label{eq:element_slope_restricted}
%\end{gather}
%where $v_i$ is the $i$'th component of $v$.
% As this holds for each element of the activation function, it holds for a conic combination, so for $\lambda_i\geq0$ 
Taking a conic combination of these constraints with multipliers $\lambda_i>0$, we get:
\begin{gather}  \label{eq:IQC}
%\sum_{i=1}^q 2\lambda_i\left(w^a_i - w^b_i\right)(w^a_i - w^b_i - \beta (v^a_i - v^b_i)^2) \leq 0,\nonumber\\
 \begin{bmatrix}
v^a_t - v^b_t \\ w^a_t - w^b_t
\end{bmatrix}^\top
\underset{M(\Lambda)}{\underbrace{\begin{bmatrix} 0 & \beta \Lambda \\ \beta \Lambda & -2\Lambda
		\end{bmatrix}}}
\begin{bmatrix}
v^a_t - v^b_t \\ w^a_t - w^b_t
\end{bmatrix}\geq 0,
\end{gather}
where $\Lambda = \mathrm{diag}(\lambda_1,...,\lambda_q)$.

\subsection{Convex Parametrization of Robust RNNs}

Corresponding to the linear system \eqref{eq:G-op}, we introduce the following implicit, redundant parametrization:
%The set of linear systems \eqref{eq:G-op} can also be parametrized by the following implicit, redundant parametrization:
\begin{equation} \label{eq:G-op-implicit} 
G\;\begin{cases}
E x_{t+1} = F x_t + B_1 w_t + B_2 u_t \\
y_t = C_1 x_t + D_{11} w_t + D_{12} u_t \\
\Lambda v_t = C_2 x_t + b + D_{22} u_t
\end{cases} 
\end{equation} 
where $ \theta=(E, F,B_1,B_2, C_1,D_{11},D_{12}, \Lambda,C_2, b, D_{22}) $ are the model parameters with $E$ invertible and $\Lambda \in \mathbb{D}_+$ is the incremental quadratic constraint multiplier from \eqref{eq:IQC}. The explicit system \eqref{eq:G-op} can be easily constructed from \eqref{eq:G-op-implicit} by inverting $E$ and $\Lambda$.
While the parameters $E$ and $\Lambda$ do not improve model expressiveness, the extra degrees of freedom will allow us to formulate sets of robust models that are jointly convex in the model parameters, stability certificate and multipliers. 

To construct the set of stable robust models, we introduce the following convex constraint:
\begin{multline} \label{eq:bound_l2_gain_lmi}
\begin{bmatrix}
E + E^\top - P  & - \beta {C}^\top_2 \\
- \beta {C}_2 &  2 \Lambda
\end{bmatrix} - 
\begin{bmatrix}
{F}^\top \\ {B}_1^\top 
\end{bmatrix}P^{-1}
\begin{bmatrix}
{F}^\top \\ {B}_1^\top 
\end{bmatrix}^\top   
\succ 0,
\end{multline}
The set of Robust RNNs is then given by:
\[
\Theta_{*}:=\{\theta:\exists P\succ 0,\,\Lambda\in\mathbb{D}_+ \;\mathrm{s.t.}\;\eqref{eq:bound_l2_gain_lmi}\}.
\]
Since $P\succ0$, \eqref{eq:bound_l2_gain_lmi} and \eqref{eq:dl2_gain_bound_lmi} imply that $E+E^\top \succ 0$ which means that $E$ is invertible. 

%\begin{multline} \label{eq:bound_l2_gain_lmi}
%	\begin{bmatrix}
%		E + E^\top - P  & - \beta {C}^\top_2 \\
%		- \beta {C}_2 &  2 T(\lambda) 
%	\end{bmatrix} - 
%	\begin{bmatrix}
%		{F}^\top \\ {B}_1^\top 
%	\end{bmatrix}P^{-1}
%	\begin{bmatrix}
%		{F}^\top \\ {B}_1^\top 
%	\end{bmatrix}^\top   
%	\succ 0
%\end{multline}
%\begin{multline} \label{eq:bound_l2_gain_lmi}
%\begin{bmatrix}
%E + E^\top - P  & 0 \\
%0 & 0 
%\end{bmatrix} - 
%\begin{bmatrix}
%F^\top \\ B_1^\top 
%\end{bmatrix}P^{-1}
%\begin{bmatrix}
%F^\top \\ B_1^\top 
%\end{bmatrix}^\top   \\
%-
%\begin{bmatrix}
%\widehat{C}_2^\top \\ \widehat{D}_{21}^\top
%\end{bmatrix}
%M(\lambda)
%\begin{bmatrix}
%\widehat{C}_2^\top \\ \widehat{D}_{21}^\top
%\end{bmatrix}^\top
%\succ 0
%\end{multline}
%The set of Robust RNNs with finite incremental $ \ell_2 $-gain 

%\[
%\Theta_{*}:=\{\theta:\exists P\succ 0,\,\Lambda\in\mathbb{D}_+ \;\mathrm{s.t.}\;E+E^\top\succ 0,\;\eqref{eq:bound_l2_gain_lmi}\}.
%\]

%To obtain a $\gamma$-Robust RNN with a incremental $ \ell_2 $-gain bound of $ \gamma $, we propose to use the following constraint:
%To obtain a $\gamma$-Robust RNN, we propose to use the following constraint:
To construct a set of $\gamma$-robust models, we propose the following convex constraint:
\begin{multline} \label{eq:dl2_gain_bound_lmi}
\begin{bmatrix}
	E + E^\top - P  & -\beta {C}_2^\top &0\\
	-\beta {C}_2 & 2\Lambda & -\beta {D}_{22}^\top \\
	0 & -\beta {D}_{22} & \gamma I
\end{bmatrix} \\ - 
\begin{bmatrix}
	F^\top \\ B_1^\top \\ B_2^\top
\end{bmatrix}P^{-1}
\begin{bmatrix}
F^\top \\ B_1^\top \\ B_2^\top
\end{bmatrix}^\top 
- \frac{1}{\gamma}
\begin{bmatrix}
	C_{1}^\top \\ D_{11}^\top \\D_{12}^\top
\end{bmatrix}
\begin{bmatrix}
C_{1}^\top \\ D_{11}^\top \\D_{12}^\top
\end{bmatrix}^\top \succ 0,
\end{multline}
% Note that if the LMI condition \eqref{eq:bound_l2_gain_lmi} is satisfied, there exists a sufficiently large $ \gamma $ such that \eqref{eq:dl2_gain_bound_lmi} holds for any choice of $ B_2 $, $ C_1 $, $ D_{11} $, $ D_{12} $ and $ D_{22}$.
% We define the set of Robust RNNs with incremental $ \ell_2 $-gain bound of $ \gamma $ as follows
 The set of $\gamma$-robust RNNs is then given by:
\[
\Theta_{\gamma}:=\{\theta:\exists P\succ 0,\,\Lambda\in\mathbb{D}_+ \;\mathrm{s.t.}\;\eqref{eq:dl2_gain_bound_lmi}\}.
\]
%\[
%\Theta_{\gamma}:=\{\theta:\exists P\succ 0,\,\Lambda\in\mathbb{D}_+ \;\mathrm{s.t.}\;E+E^\top\succ 0,\;\eqref{eq:dl2_gain_bound_lmi}\}.
%\]

Note that \eqref{eq:bound_l2_gain_lmi} and \eqref{eq:dl2_gain_bound_lmi} are jointly convex in the model parameters, stability certificate, multipliers $\Lambda$ and the incremental $\ell_2$ gain bound $\gamma$.

%\begin{remark}\label{rmk:1}
%	Note that the LMI \eqref{eq:dl2_gain_bound_lmi} is not jointly convex in the weights $C_2$, $D_{22}$ and the $ \delta $-IQC multipliers $ \lambda $. We will see that fixing $C_2$ and $D_{22}$ and optimizing over the remaining parameters still leads to highly expressive models. 
%	Fixing $C_2 = I$ and $D_{22} = 0$, the model set still contains the set of ci-RNNs, see Theorem \ref{Thm:cirnn} below. Alternatively, expressibility can be improved at the cost of computational complexity by fixing $C_2$ and $D_{22}$ as random wide layers (i.e. with large $q$). This is similar to the approach taken in the echo-state network \cite{Jaeger:2003}.
%\end{remark}

%Note that $ \Theta_{\gamma_1}\subset\Theta_{\gamma_2} $ if $ \gamma_1<\gamma_2 $. 
%An alternate approach is the D-K iteration which searches for the weights $C_2$, $D_{22}$ and $ \delta $-IQC multipliers $ \lambda $ alternately. 
%We will see that this approach still allows for highly expressive models.
\begin{theorem}\label{thm:1}
	Suppose that $ \theta\in\Theta_{\gamma} $, then the Robust RNN \eqref{eq:G-op}, \eqref{eq:Phi-op} has a incremental $ \ell_2 $-gain bound of $ \gamma $.
\end{theorem}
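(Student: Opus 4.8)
The plan is a standard discrete-time dissipativity argument: exhibit a nonnegative storage function whose one-step increment is dominated by the supply rate $\gamma|\Delta u_t|^2-\tfrac1\gamma|\Delta y_t|^2$, and then sum over time.

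First I would fix $\theta\in\Theta_\gamma$ together with a certificate $P\succ0$ and a multiplier $\Lambda\in\mathbb{D}_+$ satisfying \eqref{eq:dl2_gain_bound_lmi}. As already observed, \eqref{eq:dl2_gain_bound_lmi} forces $E+E^\top\succ P\succ0$, so $E$ is invertible and the explicit model \eqref{eq:G-op}, \eqref{eq:Phi-op} coincides with the implicit model \eqref{eq:G-op-implicit} (with $\bar F=E^{-1}F$, $\bar C_2=\Lambda^{-1}C_2$, and so on), the $v,w$ signals being identical in both; hence it suffices to work with \eqref{eq:G-op-implicit}. Given two solutions with initial conditions $a,b$ and inputs $u^a,u^b$, I would set $\Delta x_t=x_t^a-x_t^b$ and, likewise, $\Delta w_t,\Delta v_t,\Delta u_t,\Delta y_t$. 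Subtracting \eqref{eq:G-op-implicit} for the two trajectories, the bias $b$ cancels and the increments obey $E\Delta x_{t+1}=F\Delta x_t+B_1\Delta w_t+B_2\Delta u_t$, $\Delta y_t=C_1\Delta x_t+D_{11}\Delta w_t+D_{12}\Delta u_t$, and $\Lambda\Delta v_t=C_2\Delta x_t+D_{22}\Delta u_t$. Since $w^a=\Phi(v^a)$, $w^b=\Phi(v^b)$ and $\phi$ is slope-restricted, the incremental quadratic constraint \eqref{eq:IQC} holds for the pair $(\Delta v_t,\Delta w_t)$ at every $t$.

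Next I would take the nonnegative storage function $V_t:=\Delta x_t^\top E^\top P^{-1}E\,\Delta x_t$. Writing $\eta_t:=E\Delta x_{t+1}=F\Delta x_t+B_1\Delta w_t+B_2\Delta u_t$, we have $V_{t+1}=\eta_t^\top P^{-1}\eta_t$, while the identity $(E-P)^\top P^{-1}(E-P)\succeq0$ gives $E^\top P^{-1}E\succeq E+E^\top-P$, hence $\Delta x_t^\top(E+E^\top-P)\Delta x_t\le V_t$. The core step is to pre- and post-multiply \eqref{eq:dl2_gain_bound_lmi} by $\xi_t:=[\Delta x_t^\top\ \Delta w_t^\top\ \Delta u_t^\top]^\top$ and its transpose: the two rank-structured terms collapse to $-\eta_t^\top P^{-1}\eta_t=-V_{t+1}$ and $-\tfrac1\gamma|\Delta y_t|^2$, while the leading $3\times3$ block contributes $\Delta x_t^\top(E+E^\top-P)\Delta x_t+\gamma|\Delta u_t|^2$ minus exactly $\bmat{\Delta v_t\\\Delta w_t}^\top M(\Lambda)\bmat{\Delta v_t\\\Delta w_t}$ once $\Lambda\Delta v_t=C_2\Delta x_t+D_{22}\Delta u_t$ is substituted; by \eqref{eq:IQC} this last term is $\ge0$. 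Positivity of \eqref{eq:dl2_gain_bound_lmi} thus yields
\[
V_{t+1}+\tfrac1\gamma|\Delta y_t|^2\ \le\ \Delta x_t^\top(E+E^\top-P)\Delta x_t+\gamma|\Delta u_t|^2\ \le\ V_t+\gamma|\Delta u_t|^2,
\]
i.e. the dissipation inequality $V_{t+1}-V_t\le\gamma|\Delta u_t|^2-\tfrac1\gamma|\Delta y_t|^2$. Summing over $t=0,\dots,T$ and discarding $V_{T+1}\ge0$ gives $\tfrac1\gamma\|\Delta y\|_T^2\le\gamma\|\Delta u\|_T^2+V_0$; multiplying by $\gamma$ and using $\Delta x_0=a-b$, this is exactly \eqref{eq:L2-gain} with $d(a,b):=\gamma\,(a-b)^\top E^\top P^{-1}E\,(a-b)\ge0$ and $d(a,a)=0$, which proves the incremental $\ell_2$-gain bound.

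I expect the only real difficulty to be bookkeeping rather than ideas: getting the direction of the matrix inequality $E^\top P^{-1}E\succeq E+E^\top-P$ correct, checking that the activation cross-terms produced by the $3\times3$ block assemble into precisely $-\bmat{\Delta v_t\\\Delta w_t}^\top M(\Lambda)\bmat{\Delta v_t\\\Delta w_t}$ so that \eqref{eq:IQC} is invoked with the right sign, and tracking the factors of $\gamma$ so that the scaled supply rate integrates to the stated $\gamma^2$ bound rather than a $\gamma$ or a $1$. Everything else is routine linear algebra.
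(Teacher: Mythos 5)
Your proposal is correct and follows essentially the same route as the paper's proof: the same storage function $V_t=\Delta x_t^\top E^\top P^{-1}E\,\Delta x_t$, the same pre/post-multiplication of \eqref{eq:dl2_gain_bound_lmi} by $[\Delta x_t^\top\ \Delta w_t^\top\ \Delta u_t^\top]$, the same bound $-E^\top P^{-1}E\preceq P-E-E^\top$, the same use of \eqref{eq:IQC} to discard the multiplier term, and summation yielding \eqref{eq:L2-gain} with $d(a,b)=\gamma V_0$. You simply spell out the bookkeeping (incremental dynamics, bias cancellation, cross-term assembly) that the paper leaves implicit.
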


\begin{proof}
	Consider two solutions $x^a, x^b \in \ell_{2e}^n$ and outputs $y^a, y^b \in \ell_{2e}^p$ to the system \eqref{eq:rnn-ss}, \eqref{eq:rnn-output} with initial conditions $a, b \in \mathbb{R}^n$ and inputs $u^a, u^b \in \ell_{2e}^m$. Let $\Delta u = u^a - u^b$, $\Delta x = x^a - x^b$, $\Delta v = v^a - v^b$, $\Delta w = w^a - w^b$ and $\Delta y = y^a - y^b$.
	
To establish the incremental $ \ell_2 $-gain bound, we first left and right multiply \eqref{eq:dl2_gain_bound_lmi} by the vectors $\bmat{\Delta x_t^\top, \Delta w_t^\top, \Delta u_t^\top}$ and $\bmat{\Delta{x_t}^\top, \Delta{w_t}^\top, \Delta{u_t}^\top}^\top$. Applying the bound $-E^\top P^{-1}E \preceq  P-E - E^\top$ \cite{Tobenkin:2017} and introducing the storage function $V_t = \Delta x_t^\top E^\top P^{-1}E\Delta x_t$ gives
$$
V_{t+1} - V_t < \gamma|\Delta u_t|^2-\frac{1}{\gamma}|\Delta y_t|^2 -\begin{bmatrix}
\Delta v_t \\ \Delta w_t
\end{bmatrix}^\top
%\begin{bmatrix} 0 & \beta \Lambda \\ \beta \Lambda & -2\Lambda
%		\end{bmatrix}
M(\Lambda)
\begin{bmatrix}
\Delta v_t \\ \Delta w_t
\end{bmatrix}
$$
for $\Delta_x\ne 0$.  Using \eqref{eq:IQC} and summing over $[0,T]$ gives
 \begin{equation*}
 \begin{split}
 V_{T}-V_0 &<  \gamma \|\Delta {u}\|_T^2 - \frac{1}{\gamma}\|\Delta {y}\|_T^2
 \end{split},
 \end{equation*}
%\begin{equation*}
%\begin{split}
%V_{T}-V_0 &\leq  \gamma \|\Delta {u}\|_T^2 - \frac{1}{\gamma}\|\Delta {y}\|_T^2 - \sum_{t=0}^{T}\begin{bmatrix}
%\Delta {v_t} \\ \Delta {w_t}
%\end{bmatrix}^\top
%M
%\begin{bmatrix}
%\Delta {v_t} \\ \Delta {w_t}
%\end{bmatrix}.
%\end{split}
%\end{equation*}
%where $V_t = \Delta {x_t}^\top E^\top P^{-1} E \Delta {x_t}$.
for $\Delta_x\ne 0$, so the incremental $ \ell_2 $-gain condition \eqref{eq:L2-gain} follows with $ d(a, b)=\gamma V_0 $.
\end{proof}

\begin{theorem}
	Suppose that $ \theta\in\Theta_{*} $, then the Robust RNN \eqref{eq:G-op}, \eqref{eq:Phi-op} has a finite incremental $ \ell_2 $-gain.
\end{theorem}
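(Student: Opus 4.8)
The plan is to reduce the statement to Theorem~\ref{thm:1} by showing that every $\theta\in\Theta_*$ already belongs to $\Theta_\gamma$ for some finite $\gamma>0$. So I would fix $\theta\in\Theta_*$, let $P\succ0$ and $\Lambda\in\mathbb{D}_+$ be a pair certifying \eqref{eq:bound_l2_gain_lmi}, and write $Q\succ0$ for the left-hand side of \eqref{eq:bound_l2_gain_lmi}. Keeping this same $P$ and $\Lambda$, I would show that \eqref{eq:dl2_gain_bound_lmi} becomes feasible once $\gamma$ is taken large enough; Theorem~\ref{thm:1} then immediately yields an incremental $\ell_2$-gain bound $\gamma<\infty$, which is exactly finite incremental $\ell_2$-gain.

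To check feasibility of \eqref{eq:dl2_gain_bound_lmi} for large $\gamma$, I would partition its left-hand side along the $(x,w)$-coordinates versus the $u$-coordinate and take the Schur complement with respect to the lower-right block $R_\gamma := \gamma I - B_2^\top P^{-1}B_2 - \tfrac1\gamma D_{12}^\top D_{12}$. Since $B_2^\top P^{-1}B_2$ is a fixed matrix and $\tfrac1\gamma D_{12}^\top D_{12}\to 0$, one has $R_\gamma\succeq(\gamma-c)I$ for some constant $c$, so $R_\gamma\succ0$ and $R_\gamma^{-1}\to 0$ as $\gamma\to\infty$. The $(x,w)$-block of \eqref{eq:dl2_gain_bound_lmi} equals $Q$ minus a term of order $\tfrac1\gamma$, hence tends to $Q$, while the off-diagonal coupling block (the $(x,w)$-to-$u$ entry) stays bounded as $\gamma\to\infty$. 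Consequently the Schur complement converges to $Q\succ0$ and is positive definite for all sufficiently large $\gamma$; together with $R_\gamma\succ0$ this establishes \eqref{eq:dl2_gain_bound_lmi}, i.e. $\theta\in\Theta_\gamma$, and the theorem follows from Theorem~\ref{thm:1}.

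The only point requiring care is the bookkeeping in this limit, since $\gamma$ enters \eqref{eq:dl2_gain_bound_lmi} in three places — the $\gamma I$ block, the $\tfrac1\gamma$ output-feedthrough term, and implicitly through the coupling block — and one must confirm that all three cooperate as $\gamma\to\infty$ (they do: $\gamma I$ dominates a fixed perturbation, the $\tfrac1\gamma$ term vanishes, and the coupling contribution $(\cdot)R_\gamma^{-1}(\cdot)^\top$ is squeezed between bounded factors times $R_\gamma^{-1}\to0$). An alternative would be a direct Lyapunov argument parallel to the proof of Theorem~\ref{thm:1}, using $V_t = \Delta x_t^\top E^\top P^{-1} E \Delta x_t$ together with \eqref{eq:bound_l2_gain_lmi} and the IQC \eqref{eq:IQC} to obtain an estimate $V_{t+1}-V_t \le c_1|\Delta u_t|^2 - c_2|\Delta x_t|^2$ for $\Delta x_t\neq 0$, then bounding $|\Delta y_t|$ and $|\Delta w_t|$ in terms of $|\Delta x_t|$ and $|\Delta u_t|$ and summing over $[0,T]$; but the reduction to Theorem~\ref{thm:1} above is cleaner and avoids these extra estimates.
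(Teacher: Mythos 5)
Your proposal is correct and follows essentially the same route as the paper: the paper's proof likewise observes that \eqref{eq:bound_l2_gain_lmi} implies \eqref{eq:dl2_gain_bound_lmi} for sufficiently large $\gamma$ (for any $B_2$, $C_1$, $D_{11}$, $D_{12}$, $D_{22}$) and then invokes Theorem~\ref{thm:1}. Your Schur-complement bookkeeping simply fills in the "sufficiently large $\gamma$" claim that the paper asserts without detail, and it does so correctly.
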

\begin{proof}
Note that if the LMI condition \eqref{eq:bound_l2_gain_lmi} is satisfied, there exists a sufficiently large $ \gamma $ such that \eqref{eq:dl2_gain_bound_lmi} holds for any choice of $ B_2 $, $ C_1 $, $ D_{11} $, $ D_{12} $ and $ D_{22}$.
Since \eqref{eq:bound_l2_gain_lmi} implies \eqref{eq:dl2_gain_bound_lmi} for some sufficiently large $ \gamma $, from Theorem~\ref{thm:1} the Robust RNN \eqref{eq:G-op}, \eqref{eq:Phi-op} has a finite incremental $ \ell_2 $-gain bound of $ \gamma $.
\end{proof}
\begin{remark}
	Theorem 1 and Theorem 2 actually imply a stronger form of stability. For $\Delta u=0$, it is straightforward to show from the strict matrix inequalities that $V_{t+1} \le \alpha V_t$ for some $\alpha\in(0,1)$,  which implies that the dynamics are contracting \cite{Lohmiller:1998}.
\end{remark}

%	Note however, that this approach introduces a nonlinear parametrization of the dynamics in \eqref{eq:G-op}.}

%\tb{The proposed LMI conditions are jointly convex in the model parameters: $E$, $P$, $F$, $B_1$, $B_2$, $C_1$, $D_{11},$ $D_{12}$, $\tilde{C}$, $\tilde{D}$, $b$ and the IQC multipliers $\lambda$. An implicit RNN of the form \eqref{eq:Phi-op} , \eqref{eq:G-op} can be recovered by taking $C_2 = T(\lambda)^{-1}\tilde{C}$ and $D_{22} = T(\lambda)^{-1}\tilde{D}$. }

%\begin{remark}
%	We have derived \eqref{eq:dl2_gain_bound_lmi} to verify the differential $\ell_2$-gain bound \eqref{eq:diff_L2_gain}, however, other properties such as differential dissipativity can also be easily incorporated.
%\end{remark}

\subsection{Expressivity of the model set}
To be able to learn models for a wide class of systems, it is beneficial to have as expressive a model set as possible. The main result regarding expressivity is that the Robust RNN set $ \Theta_* $ contains all contracting implicit RNNs (ci-RNNs) \cite{Revay:2019} and stable LTI models. 

%To explain this result, we first introduce the concept of input/output equivalence, i.e., two RNNs with parameter $ \theta_1 $ and $ \theta_2 $ is said to be input/output equivalent (denoted $ \theta_1\sim\theta_2 $) if they admits the same input/output trajectory set. 
%A model set $ \Theta_1 $ is said to be a subset of another model set $ \Theta_2 $ (denoted $ \Theta_1\subseteq\Theta_2 $) if for any $ \theta_1\in\Theta_1 $, there exists $ \theta_2\in\Theta_2 $ such that $ \theta_1\sim\theta_2 $. And $ \Theta_1\subset\Theta_2 $ means $ \Theta_1 $ is a strict subset of $ \Theta_2 $. \remove{only need strict subset?}

%The set of LTI systems can be described by the state-space model:

\begin{theorem}
	The Robust RNN set $ \Theta_{*} $ contains all stable LTI models of the form \begin{equation}\label{eq:LTI}
x_{t+1}= \mathcal{A}x_t + \mathcal{B}u_t,\quad
y_t= \mathcal{C}x_t + \mathcal{D}u_t.
\end{equation}  %i.e. $\Theta_{\mathrm{LTI}} \subset \Theta_*$.
\end{theorem}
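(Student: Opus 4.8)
The plan is to show membership in $\Theta_*$ constructively: given a stable LTI system \eqref{eq:LTI}, I would exhibit an explicit admissible tuple $\theta=(E,F,B_1,B_2,C_1,D_{11},D_{12},\Lambda,C_2,b,D_{22})$ together with a certificate $(P,\Lambda)$, and then verify the two requirements for $\theta\in\Theta_*$: (i) the explicit realization \eqref{eq:G-op} obtained from the implicit parameters \eqref{eq:G-op-implicit} by inverting $E$ and $\Lambda$ coincides with \eqref{eq:LTI}; and (ii) the LMI \eqref{eq:bound_l2_gain_lmi} is satisfied. The guiding idea is to switch the nonlinear channel off entirely (so only the LTI part survives) and to recycle the classical discrete-time Lyapunov inequality for the certificate, using the redundant coordinate $E$ to absorb the factor $P$.

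First I would invoke discrete-time Lyapunov theory: since \eqref{eq:LTI} is stable, $\mathcal{A}$ is Schur, so there exists $P\succ 0$ with $P-\mathcal{A}^\top P\mathcal{A}\succ 0$. Then I would set $E=P$, $F=P\mathcal{A}$, $B_2=P\mathcal{B}$, $C_1=\mathcal{C}$, $D_{12}=\mathcal{D}$, and zero out every parameter coupling to $\Phi$, namely $B_1=0$, $D_{11}=0$, $C_2=0$, $D_{22}=0$, $b=0$, with $\Lambda=I_q$ for any $q\ge 1$ (the width of $\Phi$ is irrelevant here). Since $P$ is invertible so is $E$, and $\Lambda\in\mathbb{D}_+$, so $\theta$ is an admissible implicit parameter. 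Inverting $E$ and $\Lambda$ in \eqref{eq:G-op-implicit} gives $\bar F=E^{-1}F=\mathcal{A}$, $\bar B_1=E^{-1}B_1=0$, $\bar B_2=E^{-1}B_2=\mathcal{B}$, and $D_{11}=0$; hence $w_t$ enters neither the state update nor the output, the feedback loop with $\Phi$ is inert, and \eqref{eq:G-op}, \eqref{eq:Phi-op} reduces exactly to \eqref{eq:LTI}, establishing (i).

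For (ii) I would evaluate the left-hand side of \eqref{eq:bound_l2_gain_lmi} with these choices. Because $C_2=0$ and $B_1=0$, the matrix is block diagonal: its $(2,2)$ block is $2\Lambda=2I\succ 0$, and its $(1,1)$ block is $E+E^\top-P-F^\top P^{-1}F = 2P-P-(P\mathcal{A})^\top P^{-1}(P\mathcal{A}) = P-\mathcal{A}^\top P\mathcal{A}\succ 0$ by the choice of $P$ (using $P^\top=P$). Hence \eqref{eq:bound_l2_gain_lmi} holds and $\theta\in\Theta_*$, so $\Theta_*$ contains every stable LTI model.

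There is no substantial obstacle in this argument; the only points requiring a little care are confirming that the explicit realization genuinely collapses to \eqref{eq:LTI} once the $w$-channel is zeroed (which relies on $\bar B_1=0$ \emph{and} $D_{11}=0$), and carrying out the algebra of $F^\top P^{-1}F$ correctly so that the redundant choice $E=P$ turns the $(1,1)$ block of \eqref{eq:bound_l2_gain_lmi} into precisely the discrete Lyapunov expression $P-\mathcal{A}^\top P\mathcal{A}$.
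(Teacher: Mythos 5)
Your proposal is correct and follows essentially the same route as the paper: certify stability with the discrete Lyapunov inequality $P-\mathcal{A}^\top P\mathcal{A}\succ 0$, embed the LTI system via $E=P$, $F=P\mathcal{A}$, $B_2=P\mathcal{B}$ with the nonlinear channel zeroed out, and observe that \eqref{eq:bound_l2_gain_lmi} becomes block diagonal with blocks $P-\mathcal{A}^\top P\mathcal{A}$ and $2\Lambda$. Your write-up is in fact slightly more explicit than the paper's (verifying the collapse of the explicit realization and insisting on strict definiteness of both blocks), but the idea is identical.
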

\begin{proof}
	A necessary and sufficient condition for stability of \eqref{eq:LTI} is the existence of some $\mathcal{P}\succ0$ such that:
	\begin{equation} \label{eq:explicit_lti_lmi}
	\mathcal{P} - \mathcal{A}^\top \mathcal{P} \mathcal{A} \succ 0.
	\end{equation}
	For any stable LTI system, the implicit RNN with $\theta$ such that $E=P=\mathcal{P}$, $F = \mathcal{P}\mathcal{A}$, $ B_1=0 $, $B_2 = \mathcal{P}\mathcal{B}$, $C = \mathcal{C}$ and $D = \mathcal{D}$, $C_2 = 0$ and $D_{22} = 0$ has the same dynamics and output.
	To see that that $ \theta\in\Theta_* $,
	\begin{equation*}
	\begin{split}
	\eqref{eq:explicit_lti_lmi} \Rightarrow & E + E^\top  - P - F^\top P^{-1}P P^{-1}F \succ 0 \\
%	\Rightarrow & E + E^\top  - P - F^\top P^{-1}F \succ 0 \\
	\Rightarrow & \bmat{E + E^\top - P - F^\top P^{-1} F & 0 \\ 0 & 2\Lambda} \succeq 0\Rightarrow \eqref{eq:bound_l2_gain_lmi}
	\end{split}
	\end{equation*}
	for any $\Lambda \succ 0$. 
%	where $ T\in \mathbb{D}_+ $ satisfies $ T/2\preceq E + E^\top  - P - F^\top P^{-1}F $. 
\end{proof}

\begin{remark}
	Essentially the same proof technique but with the strict Bounded Real Lemma can be used to show that $\Theta_\gamma$ contains all LTI models with an  $H_\infty$ norm of $\gamma$.
\end{remark}

A ci-RNN \cite{Revay:2019} is an implicit model of the form:
\begin{equation}\label{eq:ci-rnn}
	\mathcal{E}z_{t+1}=\Phi(\mathcal{F}z_t+\mathcal{B}u_t+\mathfrak{b}), \quad
	y_t=\mathcal{C}z_t+\mathcal{D}u_t
\end{equation}
such that the following contraction condition holds
\begin{equation}\label{eq:contraction}
	\begin{bmatrix}
		\mathcal{E}+\mathcal{E}^T-\mathcal{P} & \mathcal{F}^T \\
		\mathcal{F} & \mathcal{P}
	\end{bmatrix}\succ 0
\end{equation}
where $ \mathcal{P}\in\mathbb{D}_+ $. 
%We define the convex set of ci-RNNs as
%\begin{equation}
%	\Theta_{\mathrm{ci}}:=\{\theta:\exists \mathcal{P}\in\mathbb{D}_+\quad \mathrm{s.t.}\quad \eqref{eq:contraction}\}.
%\end{equation}
The stable RNN (s-RNN), proposed in \cite{Miller:2019} is contained within the set of ci-RNNs when $\mathcal{E}=I$.

%Note that $ \Theta_{\mathrm{ci}} $ does not contain all stable LTI systems. For example, the system $ x_{t+1}=0.5x_t+u_t,\; y_t=x_t $ cannot be converted into the form \eqref{eq:ci-rnn} via coordinate transformation.
\begin{theorem} \label{Thm:cirnn}
	The Robust RNN set $ \Theta_* $ contains all ci-RNNs. % i.e. $ \Theta_{\mathrm{ci}}\subset \Theta_{*} $.
\end{theorem}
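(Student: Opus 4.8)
The plan is, for an arbitrary ci-RNN \eqref{eq:ci-rnn}--\eqref{eq:contraction}, to write down an explicit choice of the implicit parameters $\theta$ in \eqref{eq:G-op-implicit} that realises the same state-update and output maps, and a matching certificate $(P,\Lambda)$ satisfying \eqref{eq:bound_l2_gain_lmi}. For the realisation, identify the state $x_t = z_t$ (so $q=n$) and take $E=\mathcal{E}$, $F=0$, $B_1=I$, $B_2=0$, $C_1=\mathcal{C}$, $D_{11}=0$, $D_{12}=\mathcal{D}$. The only slightly delicate point is the third line of \eqref{eq:G-op-implicit}: it reads $\Lambda v_t = C_2 x_t + b + D_{22}u_t$, not $v_t=\cdots$, so the pre-activation signal of the ci-RNN is matched by folding the multiplier into the coefficients, $C_2=\Lambda\mathcal{F}$, $b=\Lambda\mathfrak{b}$, $D_{22}=\Lambda\mathcal{B}$. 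This gives $v_t=\mathcal{F}z_t+\mathcal{B}u_t+\mathfrak{b}$, hence $\mathcal{E}x_{t+1}=w_t=\Phi(v_t)=\Phi(\mathcal{F}z_t+\mathcal{B}u_t+\mathfrak{b})$ and $y_t=\mathcal{C}z_t+\mathcal{D}u_t$, i.e. exactly \eqref{eq:ci-rnn}.

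For the certificate, I would take $P=\mathcal{P}$ and $\Lambda=\beta^{-2}\mathcal{P}^{-1}$, both of which lie in the required cones because $\mathcal{P}\in\mathbb{D}_+$. With $F=0$ and $B_1=I$ the Riccati-type term in \eqref{eq:bound_l2_gain_lmi} collapses to $\mathrm{diag}(0,\,P^{-1})$, so \eqref{eq:bound_l2_gain_lmi} reduces to the claim that
\[
\begin{bmatrix}\mathcal{E}+\mathcal{E}^\top-\mathcal{P} & -\beta^{-1}\mathcal{F}^\top\mathcal{P}^{-1}\\ -\beta^{-1}\mathcal{P}^{-1}\mathcal{F} & (2\beta^{-2}-1)\mathcal{P}^{-1}\end{bmatrix}\succ 0 .
\]
The final step derives this from \eqref{eq:contraction}: the congruence transformation of \eqref{eq:contraction} by the invertible matrix $\mathrm{diag}(I,\,-\beta^{-1}\mathcal{P}^{-1})$ preserves strict definiteness and produces exactly the same block matrix with the $(2,2)$ block $(2\beta^{-2}-1)\mathcal{P}^{-1}$ replaced by $\beta^{-2}\mathcal{P}^{-1}$. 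The difference of the two is $\mathrm{diag}\!\big(0,\,(\beta^{-2}-1)\mathcal{P}^{-1}\big)\succeq 0$ whenever the ci-RNN activation obeys the slope bound with $\beta\le 1$ (the setting of \eqref{eq:contraction}, e.g. $\tanh$); adding a positive semidefinite matrix to the (strictly positive) congruence-transformed \eqref{eq:contraction} leaves it strictly positive, so \eqref{eq:bound_l2_gain_lmi} holds and $\theta\in\Theta_*$. For $\beta=1$ the two LMIs coincide exactly.

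The only real obstacle is the bookkeeping in the realisation step: one has to notice that the multiplier $\Lambda$ must be absorbed into $C_2,b,D_{22}$ so that $v_t$ reproduces the ci-RNN pre-activation, which forces the $\Lambda$-dependence of the embedded parameters and hence dictates the choice $\Lambda=\beta^{-2}\mathcal{P}^{-1}$ in the certificate. Once that correspondence is set up, the verification of \eqref{eq:bound_l2_gain_lmi} is the short congruence/Schur computation above. (The same embedding with the $\gamma$-LMI \eqref{eq:dl2_gain_bound_lmi} in place of \eqref{eq:bound_l2_gain_lmi} likewise shows $\Theta_\gamma$ contains any ci-RNN whose linear output map is small enough, mirroring the LTI remark.)
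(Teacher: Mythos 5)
Your proof is correct and takes essentially the same route as the paper: the identical embedding ($E=\mathcal{E}$, $F=0$, $B_1=I$, $B_2=0$, $C_1=\mathcal{C}$, $D_{12}=\mathcal{D}$, $C_2=\Lambda\mathcal{F}$, $b=\Lambda\mathfrak{b}$, $D_{22}=\Lambda\mathcal{B}$) with certificate $P=\mathcal{P}$, differing only in the scaling $\Lambda=\beta^{-2}\mathcal{P}^{-1}$ where the paper takes $\Lambda=\mathcal{P}^{-1}$. Your explicit tracking of the slope bound (needing $\beta\le 1$, with the two LMIs coinciding at $\beta=1$) is if anything slightly more careful than the paper's verification, which drops the factor $\beta$ and implicitly assumes $\beta=1$, as holds for the activations used in ci-RNNs.
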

\begin{proof}
For any ci-RNN, there is an implicit RNN with the same dynamics and output with $\theta$ such that
%we first show that $ \theta'\sim \theta $ where $ \theta $ represents the implicit model with 
$ F=0 $, $ E=\mathcal{E}$, $ B_1=I $, $ B_2=0 $, $C_1=\mathcal{C}$, $D_{11}=0$, $ D_{12}=\mathcal{D} $, $ \Lambda^{-1}C_2 = \mathcal{F}$, $ \Lambda^{-1}D_{22}=\mathcal{B}$, $\mathfrak{b} = \Lambda^{-1}b$, $\Lambda = P^{-1}$ and $P = \mathcal{P}$. By substituting $ \theta $ into \eqref{eq:G-op} and \eqref{eq:Phi-op}, we recover the dynamics and output of the ci-RNN in \eqref{eq:ci-rnn}.

For this parameter choice, $\theta \in \Theta_{*}$. To see this: 
%\begin{align*}
%\eqref{eq:contraction} \implies & \bmat{E + E^\top - P & C_2^\top \Lambda^{-1}\\ \Lambda^{-1} C_2 & P} \succ0 \\
%\implies & \bmat{E + E^\top - P & C_2^\top \\ C_2 & \Lambda P \Lambda} \succ 0 \\
%\implies & \bmat{E + E^\top - P & C_2^\top \\ C_2 & 2\Lambda - P^{-1}} \succ 0 \implies \eqref{eq:bound_l2_gain_lmi}.
%\end{align*}
\begin{align*}
\eqref{eq:contraction} \implies & E + E^\top - P - C_2^\top \Lambda^{-1} P^{-1} \Lambda^{-1} C_2 \succ 0 \\
%\implies & \bmat{E + E^\top - P & C_2^\top \\ C_2 & \Lambda P \Lambda} \succ 0 \\
\implies & \bmat{E + E^\top - P & C_2^\top \\ C_2 & 2\Lambda - P^{-1}} \succ 0 \implies \eqref{eq:bound_l2_gain_lmi}.
\end{align*}
The remaining conditions $P\succ 0$, $\Lambda \in \mathbb{D}_+$ and $E + E^\top \succ 0$ follow by definition.
\end{proof}

\section{Numerical Example}
We will compare the proposed Robust RNN with the (Elman) RNN \cite{Elman:1990} described by \eqref{eq:mdl-explicit}, \eqref{eq:elman_output} with $B_1=I$ and Long Short Term Memory (LSTM) \cite{Hochreiter:1997}, which is a widely-used model class that was originally proposed to resolve issues related to stability. In addition, we compare to two previously-published stable model sets, the contracting implicit RNN (ci-RNN) \cite{Revay:2019} and stable RNN (sRNN) \cite{Miller:2019}.  All models have a state dimension of 10 and all models except for the LSTM use a ReLU activation function. The LSTM is described by the following equations:
\begin{gather}
\mathrm{LSTM}\begin{cases}
i_{t+1} = \sigma(W_{xi}x_t + W_{ii}u_{t+1} + b_i), \\
f_{t+1} = \sigma(W_{xf}x_t + W_{if}u_{t+1} + b_f), \\
g_{t+1} = \sigma(W_{xg}x_t + W_{ig}u_{t+1} + b_g), \\
o_{t+1} = \sigma(W_{xo}x_t + W_{io}u_{t+1} + b_o), \\
c_{t+1} = f_{t+1} \odot c_{t} + i_{t+1}\odot g_{t+1}, \\
x_{t+1} = o_{t+1} \odot \tanh(c_{t+1}),
\end{cases}
\end{gather}
where $c_t, x_t\in \R^n$, are the cell state and  hidden state, $u_t\in \R^m$ is the input and $\odot$ is the Hadamard product and $\sigma$ is the sigmoid function. The output is a linear function of the hidden state.

To generate data, we use a simulation of four coupled mass spring dampers. The goal is to identify a mapping from the force on the initial mass to the position of the final mass. Nonlinearity is introduced through the springs' piecewise linear force profile 
\begin{gather} \label{eq:nl_spring}
F_{i}(d) = k_i \Gamma(d), \; \;
\Gamma(d) = \begin{cases}
d + 0.75, 		& -d\leq-1,\\
0.25 d, & -1<d<1,\\
d - 0.75, 		& d\geq1,
\end{cases}
\end{gather}
where $k_i$ is the spring constant for the $i$th spring and $d$ is the displacement between the carts. A schematic is shown in Fig. \ref{fig:msd_schematic}. The masses are $[m_1,...,m_4] = [1/4, 1/3, 5/12, 1/2]$, the linear damping coefficients used are $[c_1,...,c_4] = [1/4, 1/3, 5/12, 1/2]$ and spring constants used in \eqref{eq:nl_spring} are $[k_1,...,k_4] =  [1, 5/6, 2/3, 1/2]$.

\begin{figure}
	\centering
	\includegraphics[width=\linewidth, trim={0cm, 21cm, 0cm, 1.0cm}, clip]{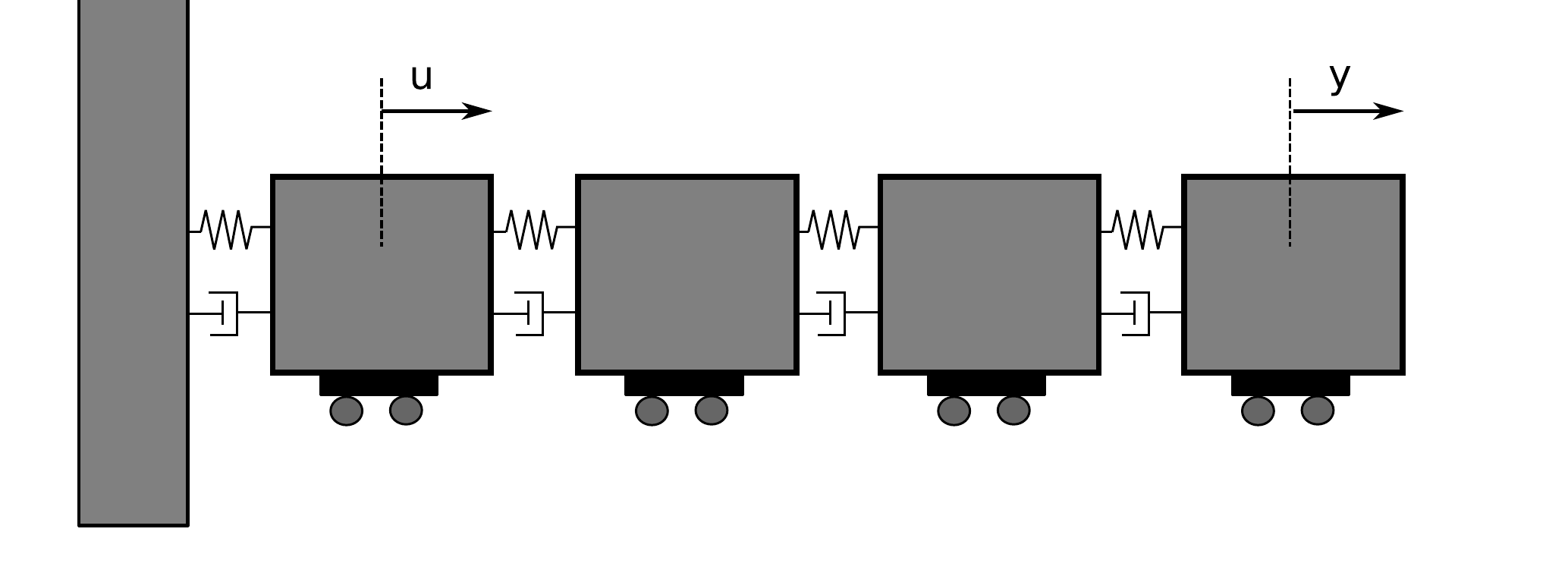}
	\caption{Nonlinear mass spring damper schematic.\label{fig:msd_schematic}}
\end{figure}
%\begin{figure}
%	\centering
%	\includegraphics[width=0.6\linewidth, trim={2cm, 0cm, 3cm, 1cm}, clip]{spring_profile.pdf}
%	\caption{Nonlinear spring profile.\label{fig:nonlinearspringprofile}}
%\end{figure}

%

%\begin{table}
%	\centering
%	\begin{tabular}{|c||c|c|c|c|}
%	\hline  i & 1 & 2 & 3 & 4 \\
%\hline\hline k & 1 & 5/6 & 2/3 & 1/2 \\
%	\hline 	c & 1/4 & 1/3 & 5/12 & 1/2 \\
%	\hline 	m & 1/4 & 1/3 & 5/12 & 1/2 \\\hline 
%	\end{tabular}
%\caption{Parameters for nonlinear mass-spring-damper simulation. \label{tab:sim_parameters}}
%\end{table}

We excite the system with a piecewise-constant input signal that changes value after an interval distributed uniformly in $[0, \tau]$ and takes values that are normally distributed with standard deviation $\sigma_u$. The measurements have Gaussian noise of approximately $30\mathrm{dB}$ added.
To generate data we simulate the system for $T/5$ seconds and sample the system at 5Hz to generate $T$ data points with an input signal characterized by $\tau=20s$ and $\sigma_u=3N$. The training data consists of 100 batches of length $1000$. 
We also generate a validation set with $\tau=20s$, $\sigma_u=3N$ and length $5000$ that is used for early stopping.
To test model performance, we generate test sets of length $1000$ with $\tau=20s$ and varying $\sigma_u $.

\subsection{Training Procedure} \label{sec:training_procedure}
%Fitting Robust RNNs requires a constrained optimization problem to be solved subject to a number of LMI constraints. I.e. we are interested in solving the following optimization problem:

%We fit Robust RNNs by solving the following problem:
%\begin{equation} \label{eq:optimization_problem}
%\min_{\theta\in \Theta_\gamma} ||\tilde{y} - S(\tilde{u})||^2 
%\end{equation}

%Where, $\Theta_\gamma$ is defined in \eqref{eq:dl2_gain_bound_lmi}.

We fit Robust RNNs by optimizing simulation error using stochastic gradient descent and logarithmic barrier functions to ensure strict feasibility of the robustness constraints.
We use the ADAM optimizer \cite{Kingma:2017}  with an initial learning rate of $1\times 10^{-3}$ to optimize the following objective function:
%minimize \eqref{eq:ipm_objective} using the ADAM optimizer with an initial learning rate of $1E-3$.
%The constraint $\theta \in \Theta_{\gamma}$ is enforced using an interior point method where we minimize a series of objective functions of the following form:
\begin{equation*} \label{eq:ipm_objective}
J = ||\tilde{y}^k - S(\tilde{u}^k)||^2 - \sum_i \alpha \log \det (M_i) - \sum_j \alpha \log \lambda_j,
\end{equation*}
where $M_i$ are the LMIs to be satisfied and $\lambda_j$ are the incremental quadratic constraint multipliers and $\tilde{u}^k$ and $\tilde{y}^k$ are the input and output for the $k$th batch. 
%As $\alpha\rightarrow 0$, this approaches the solution of the problem \eqref{eq:optimization_problem}. 
%We minimize \eqref{eq:ipm_objective} using the ADAM optimizer \cite{Kingma:2017} with an initial learning rate of $1E-3$. 
A backtracking line search ensures strict feasibility throughout optimization. After 10 epochs without an improvement in validation performance, we decrease the learning rate by a factor of $0.25$ and decrease $\alpha$ by a factor of $10$. When  $\alpha$ reaches a final value of $1\times10^{-7}$, we finish training.
All code is written using Pytorch 1.60 and run on a standard desktop CPU. The code is available at the following link: \url{https://github.com/imanchester/RobustRNN/}.

%\remove{Model initialization.}
%We initialize all Robust RNNs by first solving a linear subspace identification problem using N4SID to find matrices A,B,C and D and using the resulting matrices to initialize $E^{-1}F = A$, $E^{-1}B_2 = B$, $C_1 = C$ and $D_{12} = D$.

\subsection{ Model Evaluation}
Model quality of fit is measured using normalized simulation error:
\begin{equation}
\text{NSE}  = \frac{||\tilde{y} - y||}{||\tilde{y}||}
\end{equation}
where $y, \tilde{y} \in \ell_2^p$ are the simulated and measured system outputs respectively. 
%In order to study robustness around a certain trajectory $(\bar{x}, \bar{u})$, we will look at the differential $\ell_2$ gain along that trajectory, defined as:
%$$
%\gamma_l = \max_{\delta_u} \frac{|\delta_y|}{|\delta_u|}
%$$
Model robustness is measured by approximately solving:
\begin{equation}
\hat{\gamma} = \max_{u, v} \frac{||S(u) - S(v)||}{||u-v||}, \quad u\neq v. \label{eq:Lip_opt}
\end{equation}
using gradient ascent.
%While solving \eqref{eq:Lip_opt} exactly is complicated by non-convexity, an approximate solution can be found using gradient ascent. 
%An approximate solution to \eqref{eq:Lip_opt} is found using gradient descent.
The value of $\hat{\gamma}$ is a lower bound on the true Lipschitz constant of the model. 
%This provides a measure of worst case sensitivity to input perturbations.

\subsection{Results}
%\remove{The performance of a number of the models on the various test sets and estimates of the Lipschitz constants are shown in Table \ref{tab:Model_Performance}.} 
The validation performance versus number of epochs is shown in Fig. \ref{fig:train_time}. Note that an epoch occurs after one complete pass through the training data. In this case, this corresponds to 100 batches and gradient descent steps.
Each epoch training the Robust RNN takes twice as long as the LSTM due to the evaluation of the logarithmic barrier functions and the backtracking line search, however we will see that the model offers both stability/robustness guarantees and superior generalizability.

Figure \ref{fig:generalization} presents boxplots and a comparison of the medians for the performance of each model for a number of realizations of the input signal with varying $\sigma_u$. In each plot, there is a trough around $\sigma_u=3$ corresponding to the training data distribution. For the LSTM and RNN, the model performance quickly degrades with varying $\sigma_u$. On the other hand, the stable models exhibit a much slower decline in performance. This supports the claim that model stability constraints can improve model generalization. The Robust RNN set $\Theta_{*}$ uniformly outperforms all other models.

%Fig. \ref{fig:median_comparison} shows the medians of the same dataset for the models with and without stability constraints. 
%Comparing the stable models, we can see that the nominal NSE improves as the stability condition used becomes less conservative. Additionally, comparing the unstable models with the stable models, we can see that the stable models have better generalization, signified by the reduced slope after $\sigma_u=3$.

We have also plotted the worst case observed sensitivity versus median nominal test performance ($\sigma_u=3$) in Fig. \ref{fig:nse_vs_lip}. The Robust RNNs show the best trade-off between nominal performance and robustness signified by the fact that they lie further in the lower left corner. For instance if we compare the LSTM with the Robust RNN ($\Theta_*$), we observe similar nominal performance, however the Robust RNN has a much smaller Lipschitz constant. Varying the incremental $\ell_2$ gain allows us to trade off between model performance and robustness. We can also observe in the figure that the guaranteed upper bounds are quite tight to the observed lower bounds on the Lipschitz constant, especially for the set $\Theta_3$.

In Fig. \ref{fig:trajectory_amp10}, we have the model predictions for the RNN, LSTM and Robust RNN for a typical input with $\sigma_u=10$. We can see that even with the larger inputs, the Robust RNN continuous to accurately track the measured data. The predictions of the LSTM and RNN however deviate significantly from measured data for significant periods.

%We can see that for all plots, there is a trough corresponding to where the training data was drawn with $\sigma_u=3$. 
%Note however, that for the LSTM and the RNN, the performance of the models quickly degrades as the amplitude of the input data distribution increases. 
%For the remaining models, however, we can see that this loss in performance is much more gradual. We interpret this as an improvement in model generalizability, supporting the assertion that the Lipschitz constant is a fundamental quantity affecting a models ability to generalize. 
%Examples of the outputs and errors for these models are shown in Figure \ref{fig:example_traj} for $\sigma_u=3$ and for $\sigma_u=10$.
%Fig. \ref{fig:median_comparison} shows the medians of the same dataset for the models with and without stability constraints. 
%Comparing the stable models, we can see that the nominal NSE improves as the stability condition used becomes less conservative. Additionally, comparing the unstable models with the stable models, we can see that the stable models have better generalization, signified by the reduced slope after $\sigma_u=3$.

\begin{figure}
	\centering
	\includegraphics[width=\linewidth, trim={0.0cm, 6.5cm, 1.5cm, 7cm}, clip]{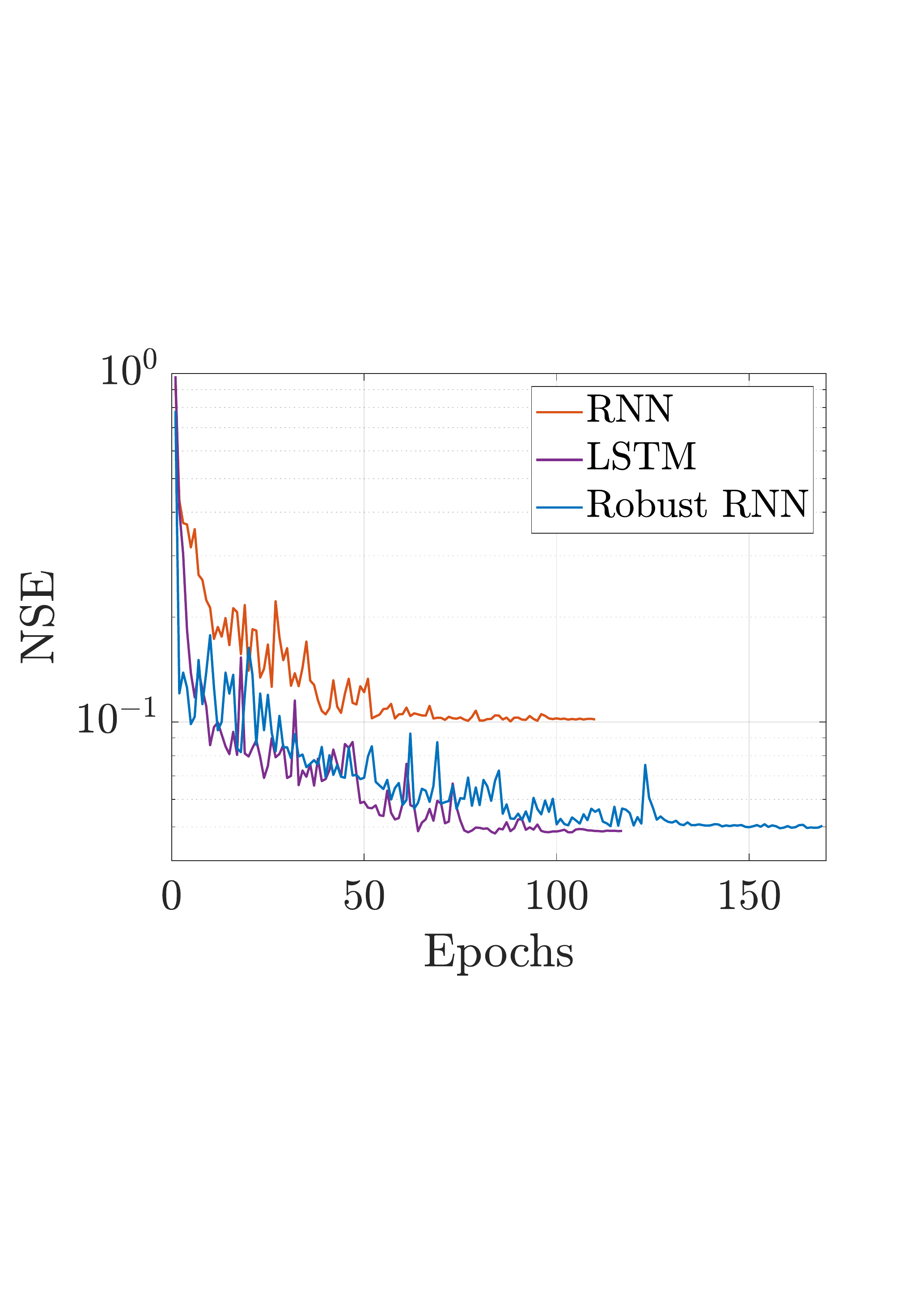}
	\caption{Validation performance versus epochs. The RNN, LSTM and Robust RNN take an average of 10.1, 18.7 and 37.6 seconds per epoch, respectively.\label{fig:train_time}}
\end{figure}

\begin{figure*}
	\centering
\begin{subfigure}{0.32\linewidth}
	\includegraphics[width=\linewidth, trim={2cm, 0.5cm, 3cm, 1.5cm}, clip]{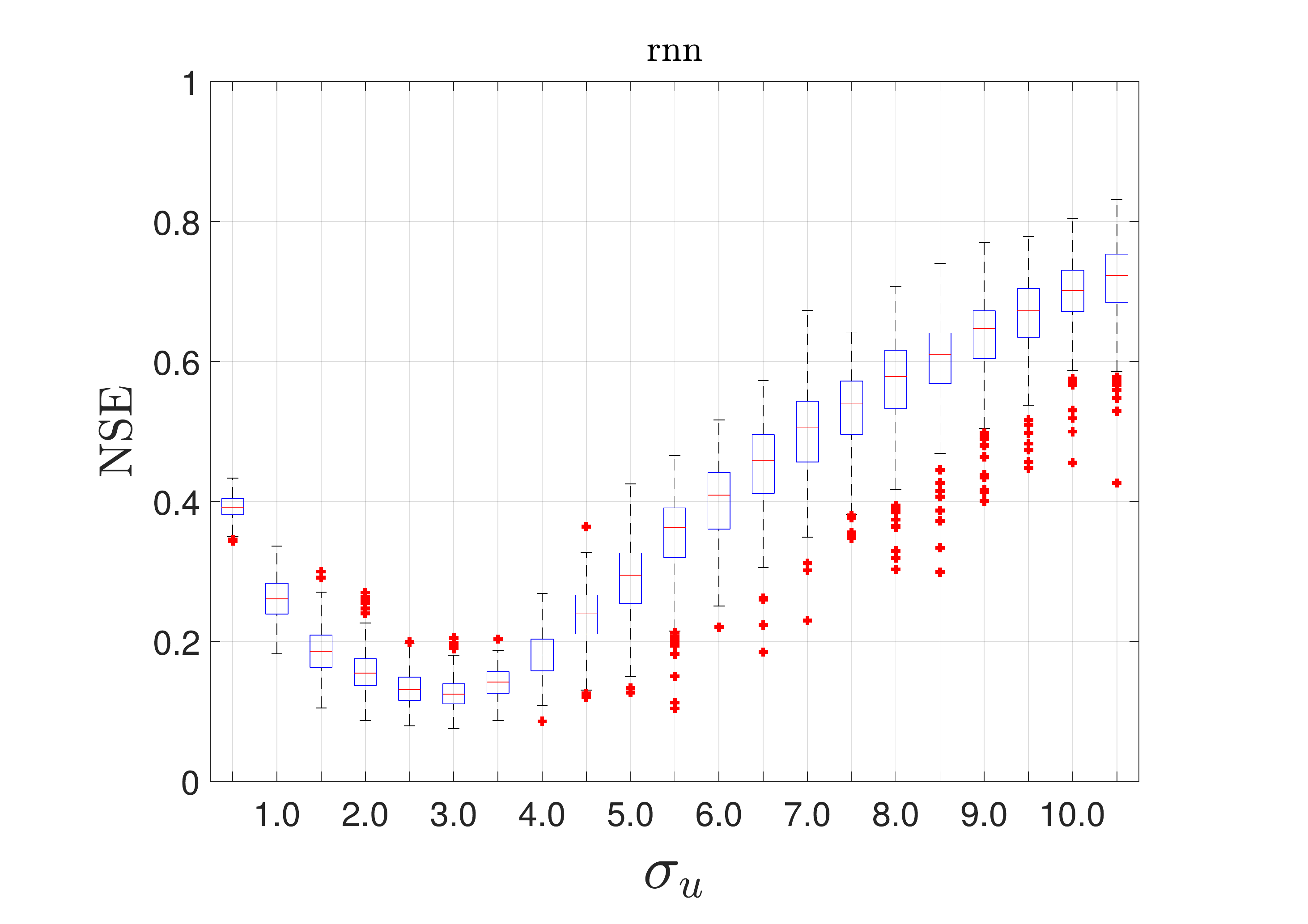}
	\caption{RNN}
\end{subfigure}
\begin{subfigure}{0.32\linewidth}
\includegraphics[width=\linewidth, trim={2cm, 0.5cm, 3cm, 1.5cm}, clip]{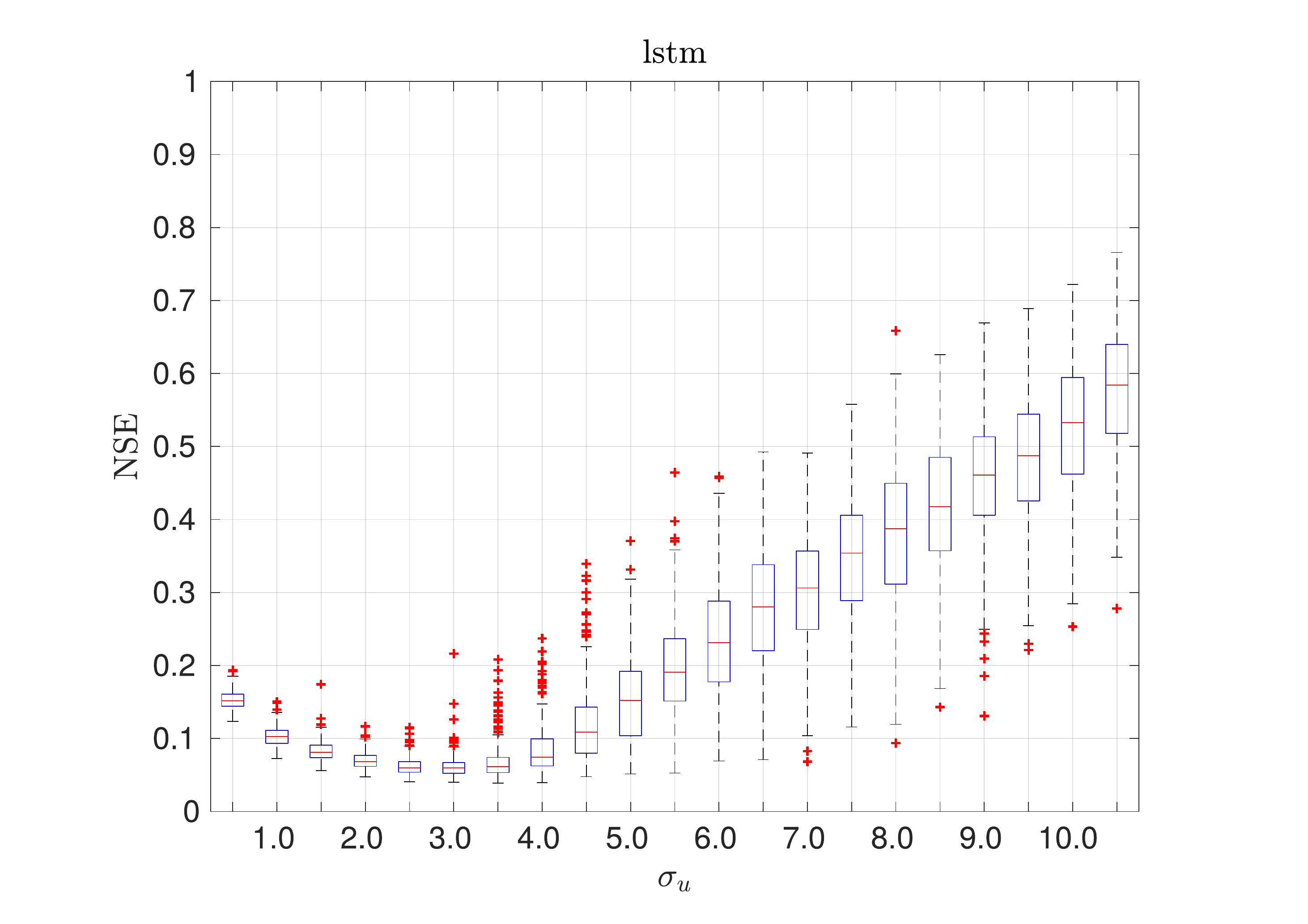}
\caption{LSTM}
\end{subfigure}
\begin{subfigure}{0.32\linewidth}
\includegraphics[width=\linewidth, trim={1.5cm, 0.5cm, 3cm, 1.5cm}, clip]{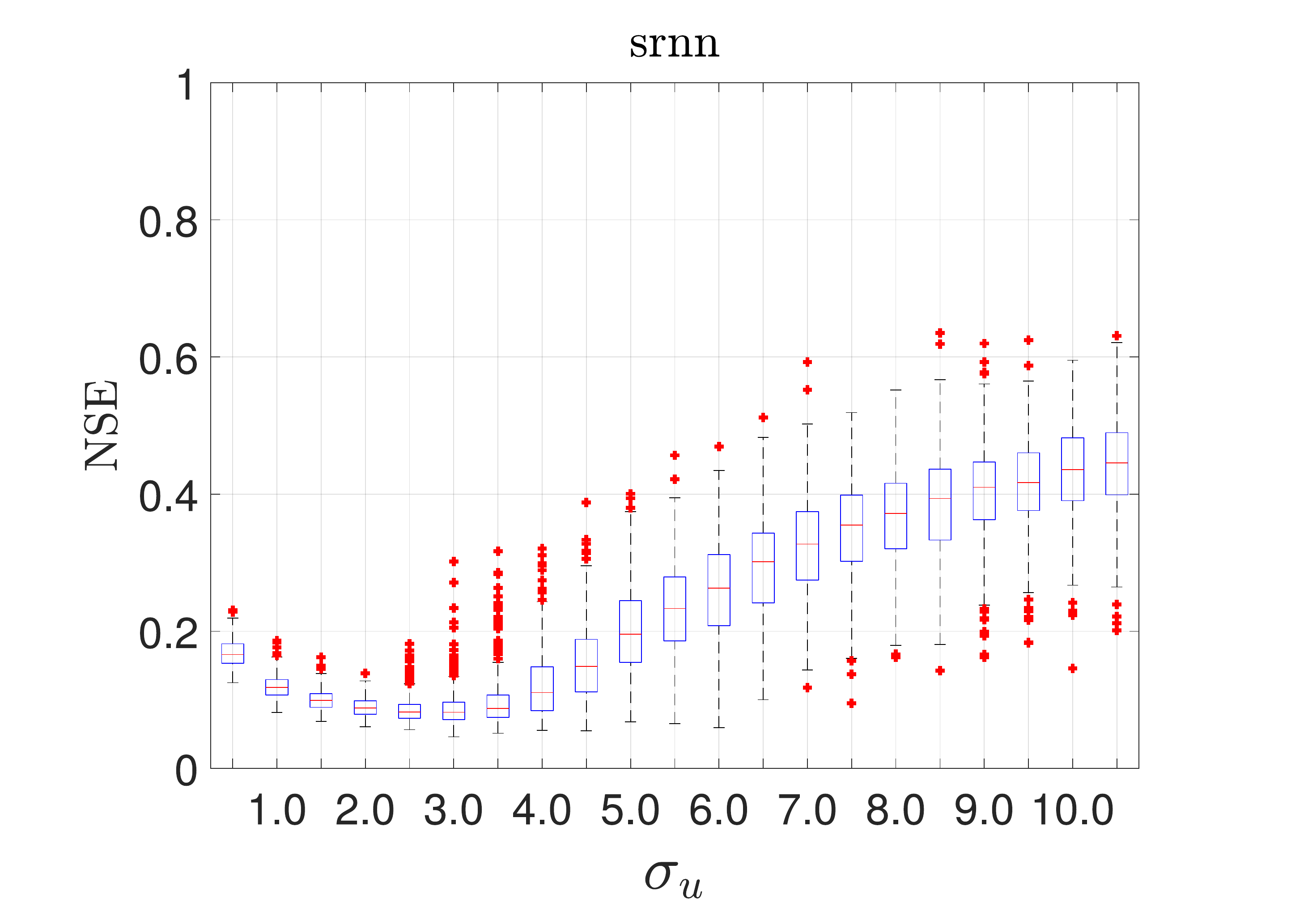}
\caption{s-RNN}
\end{subfigure}
\\
\begin{subfigure}{0.32\linewidth}
	\includegraphics[width=\linewidth, trim={2cm, 0.5cm, 3cm, 1.5cm}, clip]{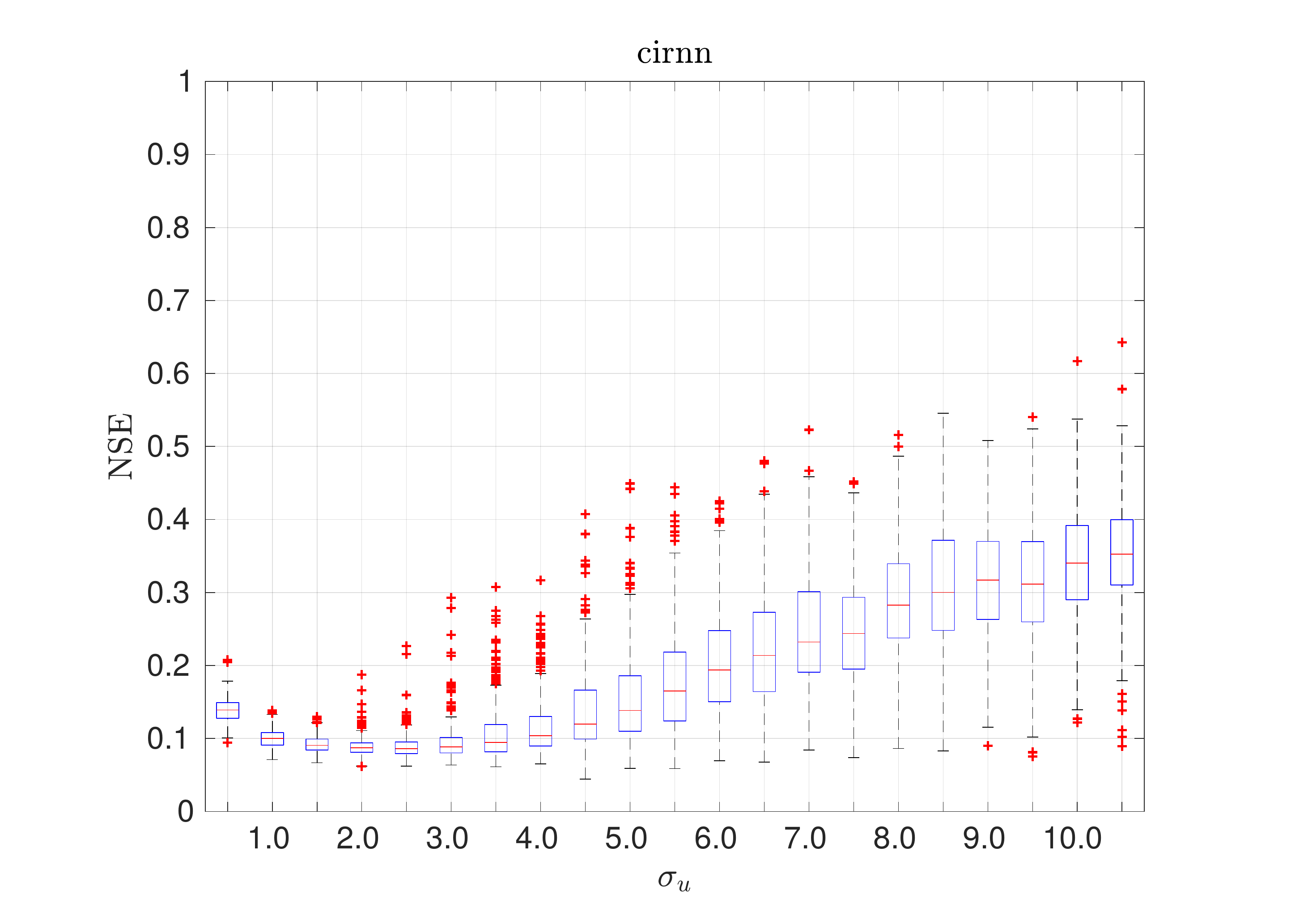}
	\caption{ci-RNN }
\end{subfigure}
\begin{subfigure}{0.32\linewidth}
	\includegraphics[width=\linewidth, trim={2cm, 0.5cm, 3cm, 1.5cm}, clip]{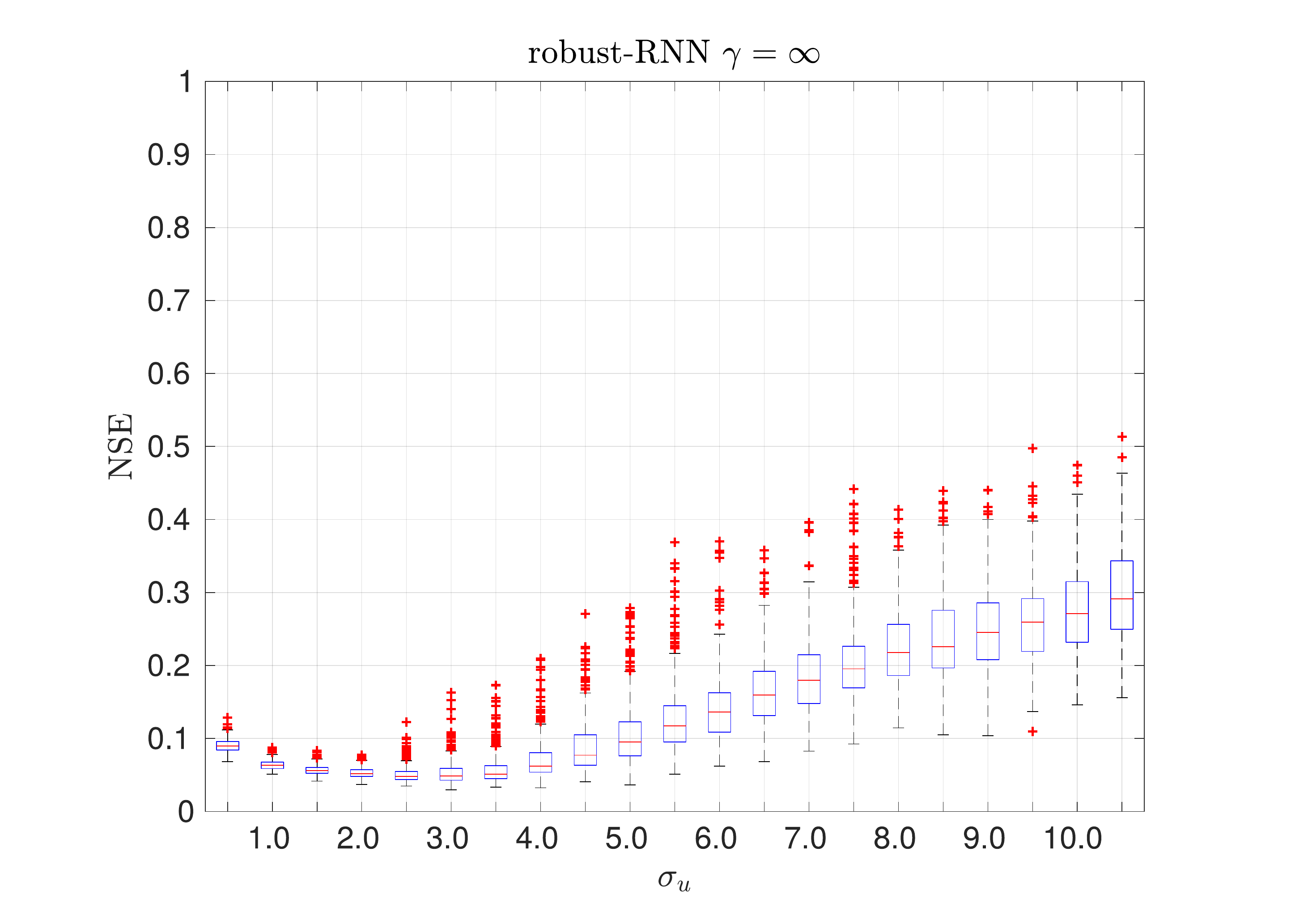}
	\caption{Robust RNN ($\Theta_*$)}
\end{subfigure}
\begin{subfigure}{0.32\linewidth}
	\includegraphics[width=\linewidth, trim={2cm, 0.5cm, 3cm, 0.5cm}, clip]{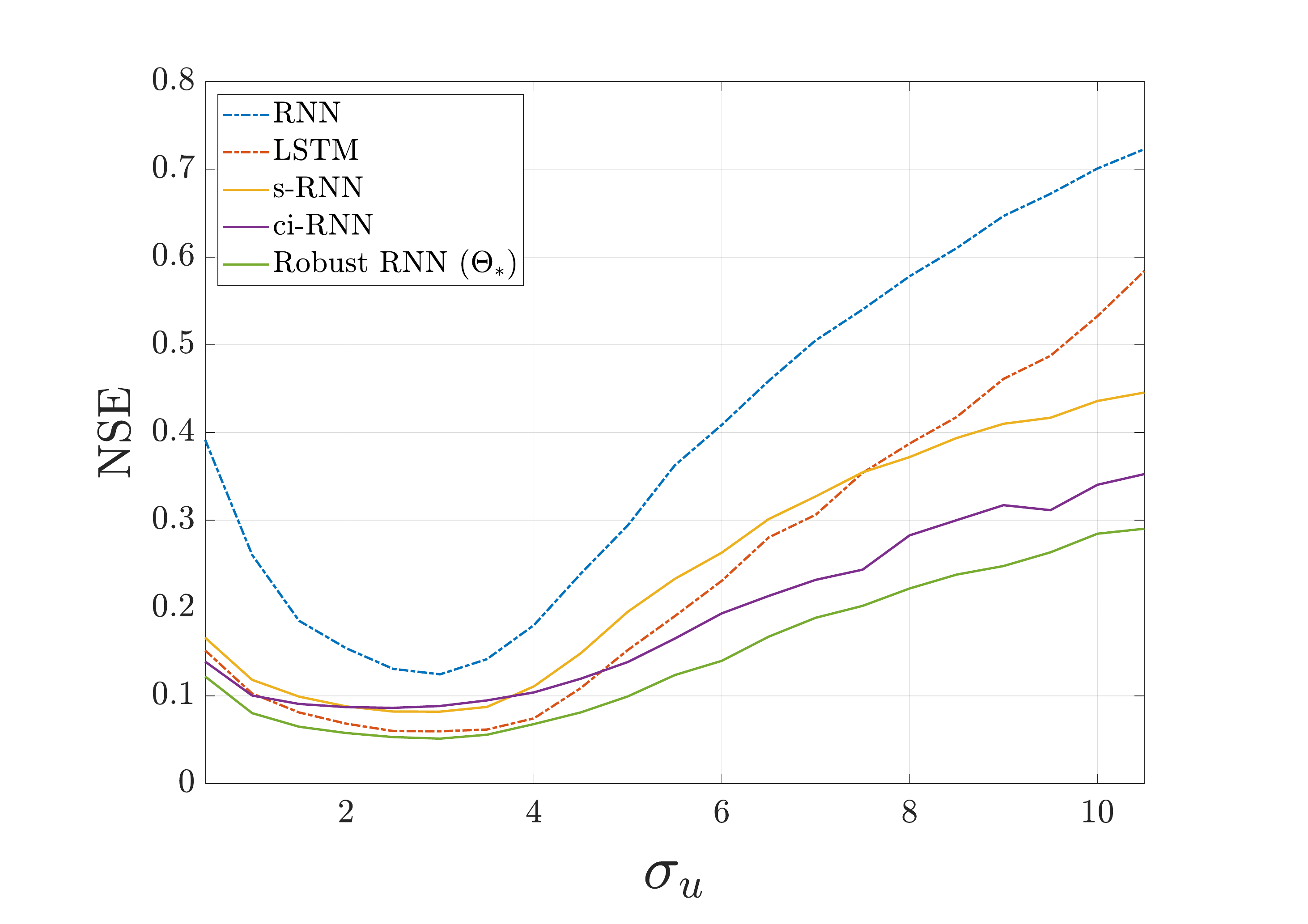}
	\caption{Comparison of median model performance.\label{fig:median_comparison}}
\end{subfigure}
\caption{\label{fig:generalization} Boxplots showing model performance for 300 input realizations for varying $\sigma_u$.  }
\end{figure*}

\begin{figure} 
	\centering
	\includegraphics[width=\linewidth, trim = {1.5cm 0.5cm 3cm 1.3cm}, clip]{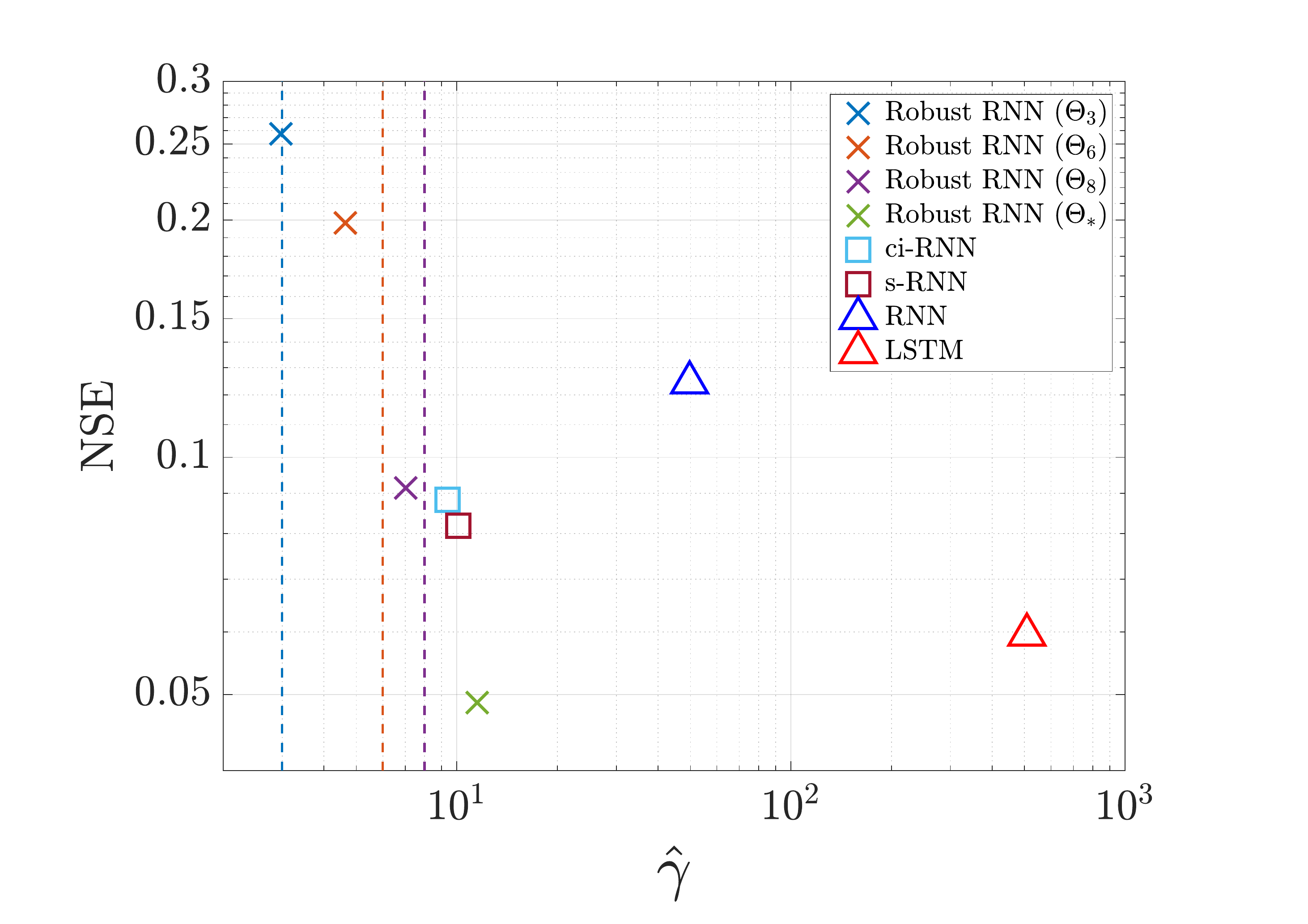}
	\caption{Test Performance ($\sigma_u=3$) on simulated ``mass-spring-damper'' system versus observed worst case sensitivity. The vertical lines represent the theoretical upper bound on the Lipschitz constant. \label{fig:nse_vs_lip}}
\end{figure}
\begin{figure}
	\centering
%	\begin{subfigure}{0.6\linewidth} \centering
%		\includegraphics[width=\linewidth, trim={2cm, 2cm, 3cm, 1cm}, clip]{example_amp3.pdf}
%		\caption{\label{fig:trajectory_amp3}$\sigma_u=3$}
%	\end{subfigure}
%	\begin{subfigure}{0.8\linewidth} \centering
		\includegraphics[width=\linewidth, trim={2cm, 2cm, 3cm, 1.3cm}, clip]{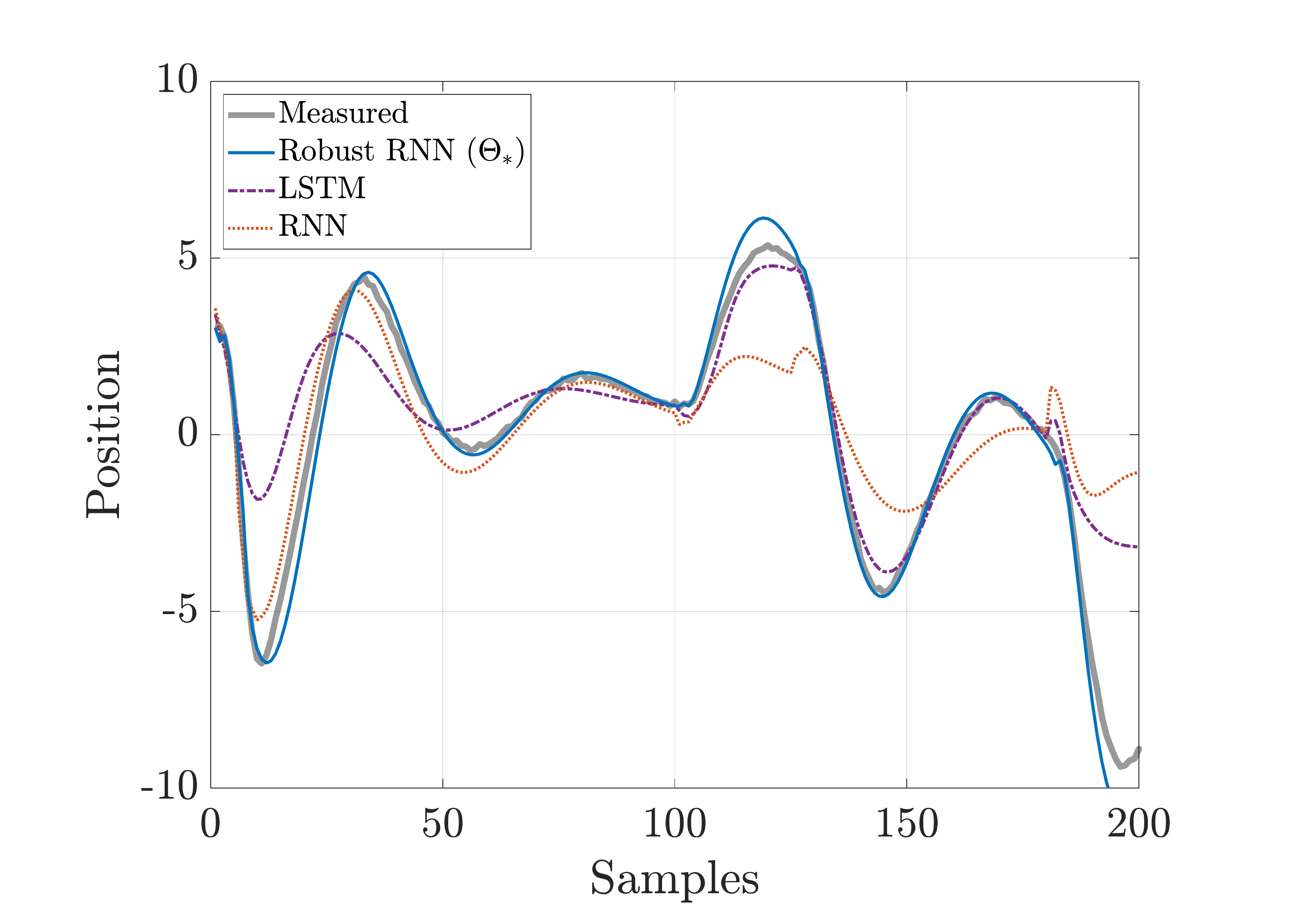}
\caption{\label{fig:trajectory_amp10} Simulation of models when $\sigma_u=10$, the robust RNN greatly outperforms the other models. }
%	\end{subfigure}
\end{figure}

%\begin{figure}
%	\centering
%	\includegraphics[width=0.95\linewidth, trim={1cm, 7cm, 2cm, 8cm}, clip]{median_comparison.pdf}
%	\caption{\label{fig:comparison}Model performance versus input distribution parameter. \tb{fuse with Figure 4}}
%\end{figure}

\bibliographystyle{IEEEtran}
\bibliography{ref}

\end{document}